\documentclass[11pt]{article}

% Change "review" to "final" to generate the final (sometimes called camera-ready) version.
% Change to "preprint" to generate a non-anonymous version with page numbers.
% \usepackage[preprint]{acl}
\usepackage[final]{acl}

% Standard package includes
\usepackage{times}
\usepackage{latexsym}

% For proper rendering and hyphenation of words containing Latin characters (including in bib files)
\usepackage[T1]{fontenc}
% For Vietnamese characters
% \usepackage[T5]{fontenc}
% See https://www.latex-project.org/help/documentation/encguide.pdf for other character sets

% This assumes your files are encoded as UTF8
\usepackage[utf8]{inputenc}

% This is not strictly necessary, and may be commented out,
% but it will improve the layout of the manuscript,
% and will typically save some space.
\usepackage{microtype}

% This is also not strictly necessary, and may be commented out.
% However, it will improve the aesthetics of text in
% the typewriter font.
\usepackage{inconsolata}

%Including images in your LaTeX document requires adding
%additional package(s)
\usepackage{graphicx}

% If the title and author information does not fit in the area allocated, uncomment the following
%
%\setlength\titlebox{<dim>}
%
% and set <dim> to something 5cm or larger.

\title{MOA: Multi-Objective Alignment for Role-Playing Agents
}

%
% Author information can be set in various styles:
% For several authors from the same institution:
% \author{Author 1 \and ... \and Author n \\
%         Address line \\ ... \\ Address line}
% if the names do not fit well on one line use
%         Author 1 \\ {\bf Author 2} \\ ... \\ {\bf Author n} \\
% For authors from different institutions:
% \author{Author 1 \\ Address line \\  ... \\ Address line
%         \And  ... \And
%         Author n \\ Address line \\ ... \\ Address line}
% To start a separate ``row'' of authors use \AND, as in
% \author{Author 1 \\ Address line \\  ... \\ Address line
%         \AND
%         Author 2 \\ Address line \\ ... \\ Address line \And
%         Author 3 \\ Address line \\ ... \\ Address line}

\author{Chonghua Liao$^1$, Ke Wang$^{2}$  \\
  \textbf{Yuchuan Wu}$^{2}$, 
  \textbf{Ruoran Li}$^1$,
  \textbf{Fei Huang}$^2$, 
  \textbf{Yongbin Li}$^{2\dagger}$\\
  $^1$ Tsinghua University, 
$^2$ Tongyi Lab \\
\texttt{lch22@mails.tsinghua.edu.cn} \quad \texttt{wk258730@alibaba-inc.com}
}

%\author{
%  \textbf{First Author\textsuperscript{1}},
%  \textbf{Second Author\textsuperscript{1,2}},
%  \textbf{Third T. Author\textsuperscript{1}},
%  \textbf{Fourth Author\textsuperscript{1}},
%\\
%  \textbf{Fifth Author\textsuperscript{1,2}},
%  \textbf{Sixth Author\textsuperscript{1}},
%  \textbf{Seventh Author\textsuperscript{1}},
%  \textbf{Eighth Author \textsuperscript{1,2,3,4}},
%\\
%  \textbf{Ninth Author\textsuperscript{1}},
%  \textbf{Tenth Author\textsuperscript{1}},
%  \textbf{Eleventh E. Author\textsuperscript{1,2,3,4,5}},
%  \textbf{Twelfth Author\textsuperscript{1}},
%\\
%  \textbf{Thirteenth Author\textsuperscript{3}},
%  \textbf{Fourteenth F. Author\textsuperscript{2,4}},
%  \textbf{Fifteenth Author\textsuperscript{1}},
%  \textbf{Sixteenth Author\textsuperscript{1}},
%\\
%  \textbf{Seventeenth S. Author\textsuperscript{4,5}},
%  \textbf{Eighteenth Author\textsuperscript{3,4}},
%  \textbf{Nineteenth N. Author\textsuperscript{2,5}},
%  \textbf{Twentieth Author\textsuperscript{1}}
%\\
%\\
%  \textsuperscript{1}Affiliation 1,
%  \textsuperscript{2}Affiliation 2,
%  \textsuperscript{3}Affiliation 3,
%  \textsuperscript{4}Affiliation 4,
%  \textsuperscript{5}Affiliation 5
%\\
%  \small{
%    \textbf{Correspondence:} \href{mailto:email@domain}{email@domain}
%  }
%}
\usepackage{caption}
\usepackage{float}
\usepackage{times}
\usepackage{latexsym}
\usepackage{multirow}
\usepackage{booktabs}
\usepackage{dsfont}
\usepackage{enumitem}

% Add
\usepackage{graphicx}
\usepackage{float}
\usepackage{subfigure}
\usepackage{subcaption}
\usepackage{tcolorbox}
\usepackage{comment}
\usepackage{pifont}
\usepackage{titletoc}
\usepackage{amsfonts}
\usepackage{booktabs}
\usepackage{arydshln}
\usepackage{colortbl}
\usepackage{algorithm}
\usepackage{algorithmicx}
\usepackage[dvipsnames]{xcolor}
\usepackage{wrapfig}

\usepackage[noend]{algpseudocode}

\usepackage{amssymb,amsthm}
\usepackage{bm}
\usepackage{physics}
\usepackage{hyperref}
\usepackage{mathtools}

\usepackage{enumitem}
\usepackage{graphicx}
\usepackage{soul}
\usepackage{colortbl,array,xcolor}
\makeatletter
\definecolor{darkgreen}{RGB}{0,100,0}
\newcommand{\inlinecomment}[1]{\Statex \hskip\ALG@thistlm \textcolor{darkgreen}{\# #1}}
\makeatother
\usepackage{amsmath}
\usepackage{fontawesome}
\usepackage{multirow}

\newcommand{\short}{MOA}

\newtheorem{assumption}{Assumption}
\newtheorem{theorem}{Theorem}
\newtheorem{corollary}{Corollary}

\begin{document}
\maketitle
\begin{abstract}
Role-playing agents (RPAs) require balancing multiple objectives, such as instruction following, persona consistency, and stylistic fidelity, which are not always perfectly aligned across different dimensions. While prior work has primarily relied on supervised fine-tuning or reinforcement learning with scalarized rewards, these approaches do not explicitly address the coordination of multiple reward dimensions during optimization. We present \textbf{MOA} (\textbf{M}ulti-\textbf{O}bjective \textbf{A}lignment), a reinforcement-learning framework that enables multi-dimensional, fine-grained rubric optimization for general RPAs. MOA introduces a novel multi-objective optimization strategy that trains simultaneously on multiple fine-grained rubrics to boost optimization performance. Additionally, to improve both output diversity and generation quality, we employ thought-augmented rollouts with off-policy guidance. Experiments on PersonaGym and RoleMRC show that MOA consistently improves multi-dimensional role-playing performance over supervised and standard RL baselines. Under identical evaluation protocols, an 8B model trained with MOA reaches performance competitive with strong closed-source models across multiple evaluation dimensions. These results suggest that MOA provides a practical framework for training more capable general-purpose role-playing agents.

\end{abstract}

%==================Introduction====================
\section{Introduction}\label{sec:intro}
Role-playing agents (RPAs) have become an increasingly active research area, drawing attention from both academia and industry. Advances in large language models have made RPAs viable for a range of interactive applications, including customer-service systems, content generation, interactive entertainment, and conversational non-player characters (NPCs) in digital games~\citep{shao2023character,wang2021naturalconv,liu2024speak,xu2024can}.

Prior work on role-playing agents has explored several complementary directions. A substantial line of research focuses on \textbf{evaluation}, proposing benchmarks and metrics to assess role-playing ability from different perspectives~\citep{lu2024large,yang2024simschat,samuel2024personagym,lu2025rolemrc}. In parallel, many studies emphasize \textbf{data-centric approaches}, where supervised fine-tuning (SFT) on synthetic or curated dialogues is used to improve general role-playing behavior~\citep{wang2025coser,tang2025thinking,wang2025opencharacter}.

Currently, SFT remains the dominant paradigm for training RPAs. However, reliance on SFT alone exhibits notable limitations. First, SFT tends to overfit surface-level patterns in the training data, resulting in limited generalization~\citep{wang2025raiden,tang2025thinking}. Second, SFT often constrains output diversity, which has been shown to hinder subsequent optimization and exploration~\citep{cui2025entropy,wang2025beyond}. 

Beyond SFT, several works have explored reinforcement learning (RL) for improving role-playing agents by transferring techniques originally developed for reasoning tasks. For instance, RAIDEN-R1~\citep{wang2025raiden} formulates role-playing optimization by treating keyword matching as a verifiable reward signal. Such approaches do not fully capture two fundamental properties of role-playing. First, role-playing performance is inherently \textbf{multi-dimensional}, requiring fine-grained rewards to reflect different aspects of the response. Second, reward dimensions in role-playing are often weakly correlated or even conflicting, such that improving one dimension may degrade others. A common example arises between role knowledge and persona style: responses that are long and structured (e.g., bullet-pointed) tend to score highly on knowledge-related criteria, yet often deviate from the intended persona style.

Under such conditions, scalarizing multiple rewards into a single objective can obscure dimension-specific learning signals. This issue becomes particularly apparent when applying standard weighted Group Relative Policy Optimization~(GRPO)~\citep{guo2025deepseek}. As illustrated in Figure~\ref{fig:main}, consider three rollouts $\mathbf{o}_2$, $\mathbf{o}_3$, and $\mathbf{o}_G$ with reward vectors $(1,0,1)$, $(1,1,0)$, and $(0,1,1)$, respectively. With uniform weighting over rubrics $\mathcal{R}_j$, all three rollouts receive identical advantages and are therefore treated equivalently during policy updates. However, when optimizing a specific dimension (e.g., $\mathcal{R}_1$), rollout $\mathbf{o}_G$ provides little useful signal yet is still reinforced, introducing noise into the optimization process. As a result, the policy cannot reliably identify which rollouts are beneficial for a given dimension, making it difficult to learn under conflicting objectives.

% 这里还得说细一点

Motivated by the above problems, a natural question arises:  \emph{Can we design an algorithm that can train a general RPA from multiple fine-grained and even conflicting rubrics?}

\begin{figure*}
    \centering
    \includegraphics[width=0.76\linewidth]{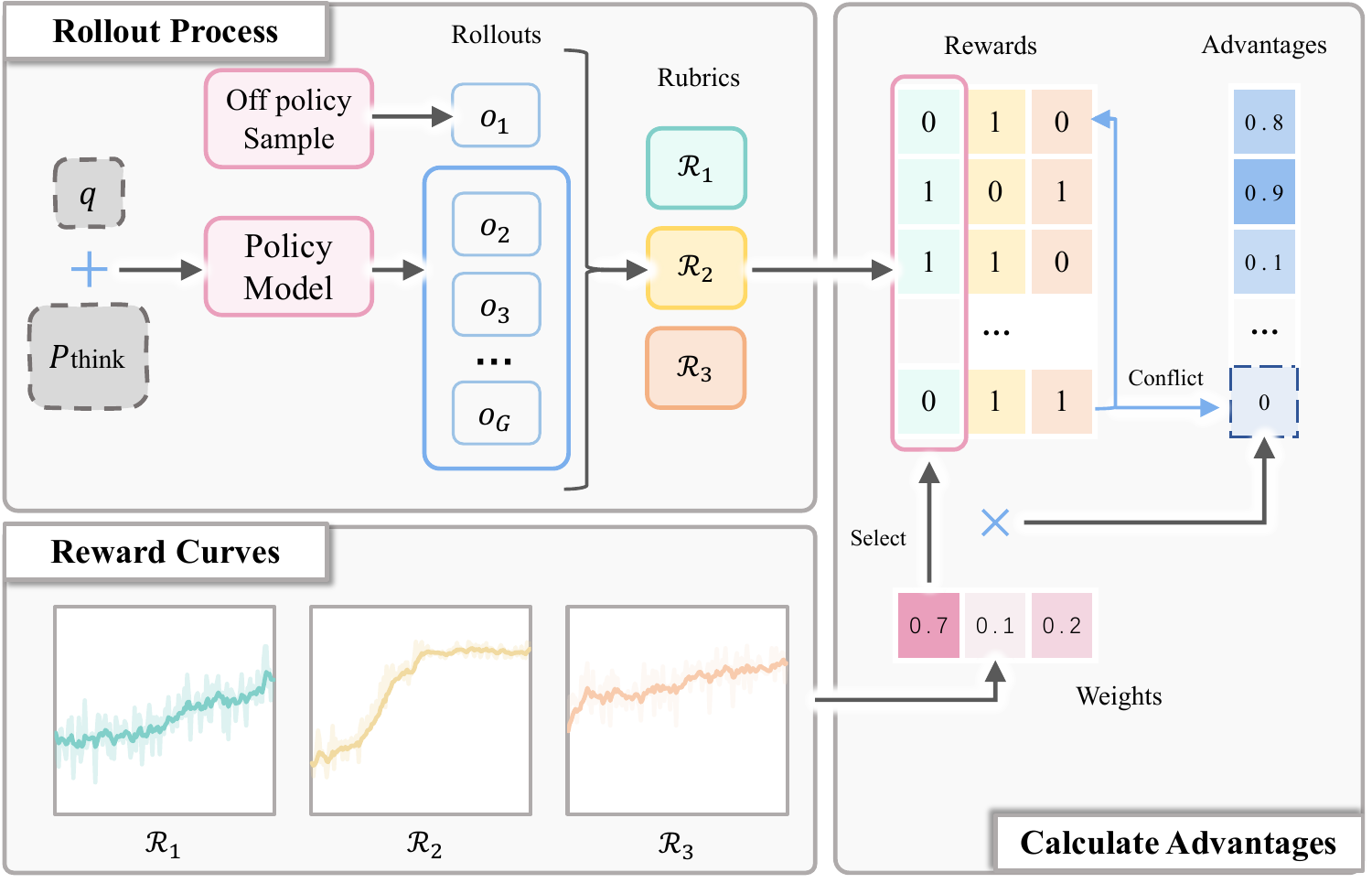}
    \caption{\textbf{Flowchart of MOA}. Given the input \(\mathbf{q}\), we first prompt the policy model to generate rollouts with thoughts, and then mix them with off-policy samples. We then score these rollouts using fine-grained rubrics~(e.g., role knowledge, persona style). Based on the reward trends from these rubrics, we dynamically select a pivot dimension for optimization and allocate weights. Finally, we eliminate conflicting samples that hinder optimization in the pivot dimension.}
    \label{fig:main}
\end{figure*}

To answer this question, we introduce \textbf{M}ulti\textbf{-O}bjective \textbf{A}lignment~(MOA), an RL framework tailored for RPAs. As shown in Figure~\ref{fig:main}, the core of MOA is a trend-aware multi-objective optimization mechanism. For each reward dimension, MOA estimates its recent optimization trend using a simple linear fit over historical reward values. At each training step, the improvement signal of a dimension is measured as the positive deviation of the current reward from its estimated trend. The dimension with the largest deviation is selected as the pivot dimension. MOA then assigns training weights to rollouts based on their deviation, preventing rollouts that perform poorly on the pivot dimension from being reinforced solely due to high rewards on other dimensions. In addition, MOA incorporates a diversified rollout strategy to guide optimization under multi-objective rewards. Specifically, we mix on-policy rollouts with off-policy samples generated by a higher-quality closed-source model. By incorporating rollouts that are both higher in quality and more diverse, this strategy increases sample diversity while maintaining reliable training signals.

% 这个3太抽象了，得说清楚点

We conduct experiments on two challenging public benchmarks to validate our method: PersonaGym~\citep{samuel2024personagym} and RoleMRC~\citep{lu2025rolemrc}. MOA consistently outperforms both SFT and standard RL baselines (e.g., GRPO) across most metrics, establishing new state-of-the-art results on general role-playing tasks. Notably, using even only an 8B model, MOA achieves comparable performance to strong baselines like GPT-4o and Claude on PersonaGym, and even surpasses GPT-4o by 21.0\% on RoleMRC.

Our contributions are summarised as follows:
\begin{itemize}[itemsep=2pt,topsep=3pt,leftmargin=12pt]
  \item We introduce MOA, a trend-aware multi-objective reinforcement learning framework that explicitly coordinates optimization across fine-grained and potentially conflicting reward dimensions.
  \item We propose a pivot-based weighting mechanism that assigns training weights based on deviation-from-trend signals, mitigating the influence of rollouts that perform well on unrelated reward dimensions.
  \item We demonstrate empirical gains across model sizes~(1.7B--8B), offering a scalable path toward building more powerful general RPAs.
\end{itemize}
  % \item We formalise the unique challenges of applying RL to RPAs, including reward sparsity, high dimensionality, and subjective evaluation.

\section{Related Work}
\label{sec:related_work}
%Role-persona Optimization with Pareto Elevation

\paragraph{Role-Playing Agents with LLMs}
RPAs~\citep{chen2024oscars} have drawn wide interest for tasks such as offering emotional companionship~\citep{liu2024speak} and enabling virtual interaction~\citep{park2023generative}. Previously, research on role-playing mainly focused on (1) \textbf{Data}: Using supervised fine-tuning on synthetic dialogues to strengthen general role-playing skills~\citep{wang2025coser,tang2025thinking,wang2025opencharacter}; (2) \textbf{Evaluation}: building better evaluation benchmarks~\citep{lu2024large,yang2024simschat,samuel2024personagym,lu2025rolemrc}. Several studies investigate automated data synthesis pipelines for role-playing, often incorporating heuristic rules or template-based features to guide generation~\citep{tang2025thinking,wang2025opencharacter}. These works provide useful insights into scalable data construction for RPAs.

\paragraph{RL-based Training for RPAs}
The latest wave of reasoning-capable large language models~(LLMs)~\citep{jaech2024openai,guo2025deepseek,team2025kimi}, have shifted focus from Chain-of-Thought (CoT)~\citep{wei2022chain} and SFT~\citep{li2024common,yeo2025demystifying} to RL. Contemporary research has converged on three frontiers: (1) fixing GRPO’s inherent limitations~\citep{yu2025dapo,liu2025understanding}; (2) building smarter data pipelines~\citep{zuo2025ttrl,wang2025reinforcement} and (3) focusing on entropy mechanisms to encourage exploration~\citep{wang2025beyond,cui2025entropy,kang2025entropy}. 

Related efforts have also examined the interaction between RL and role-playing~\citep{feng2025reasoningdoesnecessarilyimprove}. MOPO~\citep{agnihotri2025multi} studies multi-objective optimization in the context of direct preference optimization (DPO)~\citep{rafailov2023direct}, providing theoretical insights into learning from two objectives. RAIDEN-R1~\citep{wang2025raiden} directly uses keyword matching as a verifiable reward signal. COMEDY~\citep{chen2025compress} uses GPT4 to synthesize preference pairs, which are then used to train the model with DPO. However, these approaches are generally designed for specific objectives and do not explicitly address the multi-dimensional requirements of general-purpose role-playing agents.

\section{Multi-Objective Alignment}
% TODO

% 在图里体现onpolicy 采样有thought
% thought
% offpolicy
% multi-objective
\begin{algorithm}
    \caption{Multi-Objective Optimization with GRPO}
    \label{algorithm}
    \begin{algorithmic}[1]
        \Require reward tensor $\mathbf{R} \in \mathbb{R}^{G \times D}$, average rewards history buffer $\mathbf{H}\in \mathbb{R}^{K \times D}$, temperature coefficient $\beta$. (generations~$G$, reward dims~$D$, number of stored steps $K$)
        \inlinecomment{ --- 1. estimate importance weight ---}
        \State \textbf{Compute linear regression estimate from $\mathbf{H}$:}
        \State $\hat{r}_d \gets \text{LinearRegressionEstimate}(\mathbf{H}_{:,d})$ 
        \State \textbf{Compute mean reward from $\mathbf{R}$:}
        \State $\bar{r}_d^t \gets \frac{1}{G} \sum_{g=1}^{G} r_{g,d}^t$ 
        \State $u_{d}^t\gets\bar{r}_d^t-\hat{r}_d^t$ ~ for ~ $d=1{\dots}D$
        \State $\mathbf{w}^t\gets\text{softmax}(\mathbf{u}^t\beta)$
        \State $d^{*}\gets\arg\max_d \mathbf{w}^t$
        \inlinecomment{--- 2. remove conflicting samples ---}
        \State \textbf{Find the largest subset satisfying the partial order}
        \State $\mathcal{M}\gets\text{LargestSubset}(\mathbf{R}, d^{*})$
        \inlinecomment{--- 3. compute advantage ---}
        \State $\mathbf{R}' \gets \mathbf{w^t}^\top \mathbf{R}, \mathbf{R}' \in \mathbb{R}^G$ 
        \State $\mu\gets\text{mean} (\mathbf{R}')$,\ ~ $\sigma\gets\text{std} (\mathbf{R}')$
        \State ${A}_{g}\gets
\begin{cases}
({R}'_g-\mu)/(\sigma{+}\epsilon), & g\in\mathcal{M}\\
0, & g\notin\mathcal{M}
\end{cases}$
    \State \textbf{Output: } advantage ~ $\mathbf{A} = (A_1,\dots,A_G)\in\mathbb{R}^{G}$
    \end{algorithmic}
\end{algorithm}
Unlike traditional verifiable tasks such as math or coding, role-playing is characterized by (1) \textit{multiple reward dimensions} and (2) the \textit{limited output diversity of domain-adapted models}, which results from SFT fine-tuning that often reduces the model’s behavioral variety. This makes the direct transfer of RL approaches highly non-trivial. In this section, we first recap the widely-used RL algorithm GRPO~\citep{grpo}, and then present our multi-objective optimization approach. Then, we provide strategies to obtain diverse and high-quality rollouts.
\subsection{Preliminaries}

\paragraph{GRPO} The widely used GRPO first scores every complete rollout trajectory with a single scalar, then normalizes these scores within the current group of rollouts. Specifically, let $\pi_{\theta_{\text{old}}}$ denote the policy model before updating. For an input question $\mathbf{q}$, we sample $G$ outputs $\{ \mathbf{o}_1, \ldots, \mathbf{o}_G \}$ from the current policy LLM $\pi_{\theta_{\text{old}}}$, the normalized reward $A_{i,t}$ is shared across all tokens in $\mathbf{o}_i$ as the advantage estimate:
\begin{equation}
\begin{gathered}
A_{i,l} = \frac{r(\mathbf{o}_i) - \text{mean}(\{r(\mathbf{o}_i) \mid \mathbf{o}_i \sim \pi_{\theta_{\text{old}}}\})}{\color{black}\text{std}(\{r(\mathbf{o}_i) \mid \mathbf{o}_i \sim \pi_{\theta_{\text{old}}}\})}\,.
\end{gathered}
\label{eq:advantage}
\end{equation}
Then, the GRPO objective function can be written as:
\begin{equation}
\mathcal{J}_{}(\pi_{\theta}) = \frac{1}{G}\sum_{i=1}^G \frac{1}{|\mathbf{o}_i|}\sum_{t=1}^{|\mathbf{o}_i|} \left\{ \min\left[\rho_{i,t}{A}_{i,l}, \hat{\rho}_{i,l}{A}_{i,l}\right] \right\} \\%+ \gamma \cdot \mathcal{H}(\pi_{\theta}),  \\
\label{eq:grpo}
\end{equation}
with probability ratio  $\rho_{i,l} = \frac{\pi_\theta(o_{i,l} | \mathbf{q},\mathbf{o}_{i,<l})}{\pi_{\theta_{\text{old}}}(o_{i,l} | \mathbf{q},\mathbf{o}_{i,<l})}$, clipped ratio ${\hat{\rho}_{i,l}=\text{clip}(\rho_{i,l}; 1-\epsilon, 1+\epsilon)}$ and $l$ represents the $l$-th token in the rollout. Here, for simplicity, the KL divergence term is omitted.

% In practical implementations of GRPO, $r(\cdot)$ typically represents a rule-based verifier.

% To be specific, for a given persona informantion $\mathbf{p}$, an input question $\mathbf{q}$ and an output $\mathbf{o_i}$, it's corresponding reward at the $j$-th dimension can be written as $r_{j}(\mathbf{o_i})=\mathcal{M}(\mathcal{R}_j(\mathbf{p}, \mathbf{q}, \mathbf{o_i}))$. Here we use $\mathcal{M}$ and $\mathcal{R}_j$ to denote a strong closed-source language model and the scoring rubrics at the $j$-th dimension, separately.

\subsection{Multi-Objective Optimization}
% role-playing任务的一大关键特征就是多维度的reward, 且reward之间往往会产生冲突，例如，在我们的前期实验中，我们观察到一个group内，knowledge reward和linguistic habits rewards之间的rank corrlation只有0.15左右。这给同时优化这样的目标带来了很大的困难
% 如果仅仅使用普通加权版GRPO，很多信息会丢失，在图中，例如考虑124，且给定同样的权重，显然，这三个rollouts会得到完全一样的advantage, 都被视作正样本，但是仅仅在维度1，最后一个sample被误当作正样本处理了，引入了噪声。策略无法识别哪些行为在特定目标上是有益的。
% 如何实现一个策略，使得模型能够学习到一个group内更显著的特征，并解决目标之间的冲突，是问题的关键。基于这个motivation, 我们设计了pivot dimen selection和conflict rollouts elimination两部分

A key characteristic of role-playing tasks is the multi-dimensional reward structure, where different reward dimensions may be weakly correlated or even conflicting.

To address this challenge, we introduce two complementary components: \textit{Pivot Dimension Selection}, which identifies the reward dimension to focus on at each stage of training, and \textit{Conflict Rollout Elimination}, which reduces the influence of rollouts that perform well on other dimensions but poorly on the selected pivot. The overall procedure is summarized in Algorithm~\ref{algorithm}.

% Role-playing is uniquely multi-dimensional, yet the metrics often clash. 
% A structured reply may score high on persona knowledge but low on style. In GRPO, when advantages are produced by a fixed weighted sum of multiple role-playing objectives, high-dimensional reward signals are compressed into a single scalar. Consequently, two roll-outs that excel in orthogonal goal: say $\mathbf{o}_1$ scoring (1, 0) and $\mathbf{o}_2$ scoring (0, 1)—receive the same advantage although they furnish complementary evidence about how to improve. This averaging discards per-objective information, hides metric conflicts, and can stall policy updates when the weighted totals coincide. A more informative estimator should retain the full reward vector and expose the optimizer to each dimension separately, allowing the policy to learn where to satisfy one objective without harming another.
% 画一个图

% We observe that individual reward dimensions exhibit markedly different learning difficulties. For instance, rule-based format rewards provide clean, deterministic gradients, whereas knowledge-related rewards that rely on a judge LM are noisier and harder to optimize. Inspired by curriculum learning, we propose Multi-Objective EMA-Slope Ascent (MOESA).

\paragraph{Pivot Dimension Selection}
% 我们希望尽量避免同时关注太多维度，并借鉴了课程学习的思想。在某个优化的步骤，并非所有的维度都同等值得学习。我们应该优先学习那些容易学习的维度，然后再学习那些不那么值得学习的维度。
Optimizing all reward dimensions simultaneously can introduce substantial interference, especially when objectives are weakly correlated or conflicting. Instead of treating all dimensions equally at every training step, we adopt a strategy that emphasizes different reward dimensions at different stages of optimization. This strategy is loosely inspired by ideas from curriculum-style optimization~\citep{soviany2022curriculum}.

Specifically, at the current training step $t$, given a group of \( G \) rollouts associated with one input query $\mathbf{q}$, we collect a reward matrix $\mathbf{R} = [r_{g,d}] \in \mathbb{R}^{G \times D},$ where \( r_{g,d} = r_{d}(\mathbf{o}_g) \) is the \( d \)-th dimensional reward of the \( g \)-th rollout (\( g=1,\dots,G;\ d=1,\dots,D \)). We want to identify which dimension is the most worthy of learning at step \( t \). A natural approach is to greedily select the dimension that shows the highest improvement trend at the current step. We first calculate the average reward for each dimension at every step $\bar{r}_d^t$. And these average rewards are stored in the history buffer as \textbf{reward curves}. This results in a tensor of size \(\mathbf{H} = [\bar{r}_d^k]\in\mathbb{R}^{K \times D}\), where \(K\) represents the number of retained training steps, from $t-K-1$ to $t-1$. Each element \(\bar{r}_{d}^t\) in the tensor denotes the average reward for dimension \(d\) at step \(t\). Then, we use linear regression to estimate the average reward \(\hat{r}_{d}^t\) for dimension \(d\) at step \(t\), and obtain the residual \(u_{d}^t = \bar{r}_{d}^t - \hat{r}_{d}^t\).

These residuals are converted into a probability vector by the softmax operator\[
w_{d}^t= \frac{\exp(u_{d}^t \beta)}{\sum_{j=1}^{D}\exp(u_{j}^t\beta)},\qquad \mathbf{w}^t=[w_{1}^t,\dots,w_{D}^t]^{\top},
\]where \( \beta >0 \) is a temperature hyper-parameter.   Hence each dimension obtains an importance weight \( w_{d} \) that reflects how much it currently outperforms its own short-term trend. The dimension with the largest reward increase currently represents the easiest learning difficulty and is the most worthy of learning at the current step. Thus, we select this dimension $d^*$ as the pivot dimension for step \( t \).
\begin{algorithm}
    \caption{LargestSubset}
    \label{LargestSubset}
    \begin{algorithmic}[1]
        \Require $\mathbf{w}^t \in \mathbb{R}^D$, $\mathbf{R} \in \mathbb{R}^{G\times D}$, $d^* \in [0,D]$
        \inlinecomment{Compute weighted sums and sort}
        \State $\text{\textit{pairs}} = \{ (R_{g, d^*}, \mathbf{w^t}^\top \mathbf{R}_g) \mid g \in [0,G] \}$
        \inlinecomment{Sort \textit{pairs} by $d^*$ dimension and weighted sum}
        \State $\text{\textit{sorted\_pairs}} = \text{sort}(\text{\textit{pairs}})$
        \State Initialize LIS as an empty list: $\text{LIS} = []$
        \For{each $(x, y) \in \text{\textit{sorted\_pairs}}$}
            \inlinecomment{Find insertion position:}
            \State $\text{\textit{position}} = \text{binary\_search}(\text{LIS}, y, \text{dim}=-1)$
            \If{$\text{\textit{position}} = \text{length}(\text{LIS})$}
                \State Append $(x, y)$ to $\text{LIS}$
            \Else
                \State Update $\text{LIS}[\text{\textit{position}}] = (x, y)$
            \EndIf
        \EndFor
        \inlinecomment{ Extract corresponding subset in $[0,G]$: }
        \State $\mathcal{M} = \{ g \in [0,G] \mid (R_{g, d^*}, \mathbf{w^t}^\top \mathbf{R}_g) \in \text{LIS} \}$
        \State \textbf{Output: } $\mathcal{M}$
    \end{algorithmic}
\end{algorithm}
\begin{theorem} The residual–softmax scheme yields strictly larger expected immediate improvement than the uniform-weight RL.
\end{theorem}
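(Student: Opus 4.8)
The plan is to reduce the statement to a clean inequality between a softmax‑weighted mean and a uniform mean of the per‑dimension improvement signals, and then settle that inequality with a short convexity/variance argument. First I would pin down what ``expected immediate improvement'' means here: for a weight vector $\mathbf{w}$ the scalarized objective being optimized is $R'_g=\mathbf{w}^\top\mathbf{R}_g$, and the residual $u_d^t=\bar{r}_d^t-\hat{r}_d^t$ is precisely the framework's estimate of the instantaneous improvement rate of dimension $d$ (how far its current average reward runs above its own short‑term linear trend). Treating the $u_d^t$ as the local improvement rates, the expected immediate improvement under weights $\mathbf{w}$ is the linear functional $I(\mathbf{w})=\sum_{d=1}^{D} w_d\,u_d^t$. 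The theorem then amounts to showing
\[
I(\mathbf{w}^t)=\sum_{d} w_d^t\,u_d^t \;>\; \frac{1}{D}\sum_{d} u_d^t = I(\mathbf{w}^{\text{unif}}),
\qquad w_d^t=\frac{e^{\beta u_d^t}}{\sum_j e^{\beta u_j^t}} .
\]

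Second, I would introduce the one‑parameter family $f(\beta)=\dfrac{\sum_d u_d^t e^{\beta u_d^t}}{\sum_d e^{\beta u_d^t}}$ and record two facts. At $\beta=0$ the softmax collapses to the uniform distribution, so $f(0)=\tfrac{1}{D}\sum_d u_d^t$, the uniform‑weight improvement. Moreover $f(\beta)=\Phi'(\beta)$ for the log‑partition function $\Phi(\beta)=\log\sum_d e^{\beta u_d^t}$, hence $f'(\beta)=\Phi''(\beta)=\mathrm{Var}_{\mathbf{w}^t(\beta)}(u^t)\ge 0$, the variance of the residuals under the current softmax distribution, which is strictly positive unless all residuals coincide. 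Combining these, $f$ is strictly increasing on $(0,\infty)$ whenever the residuals are not all equal, so for every $\beta>0$ we obtain $f(\beta)>f(0)$, which is exactly the desired strict inequality. Equivalently, one can read $I(\mathbf{w}^t)-\tfrac1D\sum_d u_d^t$ as the covariance between $D w_d^t$ and $u_d^t$ under the uniform measure: since $w_d^t$ is strictly increasing in $u_d^t$ for $\beta>0$, this covariance of comonotone quantities is positive, giving the same conclusion.

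I would then state the mild non‑degeneracy hypothesis explicitly, namely that at step $t$ the residuals $\{u_d^t\}_{d=1}^{D}$ are not all identical; without it both schemes yield the same improvement and the inequality degrades to an equality. In practice the conflicting, differently‑paced reward curves that motivate \short{} guarantee this condition, so the strict gap holds at essentially every step, and it widens with the temperature $\beta$ (through $f'=\mathrm{Var}$).

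The main obstacle I anticipate is not the inequality itself---the variance‑nonnegativity / convexity‑of‑log‑sum‑exp argument is standard and short---but the modeling step of justifying that the linear‑regression residual $u_d^t$ is the correct surrogate for ``expected immediate improvement,'' i.e.\ that up‑weighting dimensions currently running above trend genuinely maximizes the next‑step gain. I would defend this by noting that $u_d^t$ is a first‑order (local) estimate of the improvement slope, so to first order the scalarized objective's expected one‑step change is $\sum_d w_d u_d^t$; and I would observe that the argument is robust to the exact calibration, since any strictly increasing reparameterization of the residuals preserves their ordering and hence preserves the positive correlation between $w_d^t$ and $u_d^t$ that drives the proof.
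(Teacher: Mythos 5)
Your convexity argument is correct as far as it goes --- $f(\beta)=\Phi'(\beta)$ for the log-partition function $\Phi(\beta)=\log\sum_d e^{\beta u_d^t}$, so $f'(\beta)=\mathrm{Var}_{w(\beta)}(u)\ge 0$ and the softmax-weighted mean of the residuals strictly exceeds their uniform mean whenever the $u_d^t$ are not all equal --- and it is cleaner than the paper's argument in one respect: it is exact and holds for every $\beta>0$, whereas the paper only establishes its gap to leading order in a small-$\beta$ Taylor expansion. But the quantity you bound is not the quantity the theorem is about, and the discrepancy sits exactly at the step you yourself flag as the ``main obstacle.'' The paper formalizes ``expected immediate improvement'' via the standard first-order policy-gradient approximation: the update is $\Delta\theta=\eta\sum_d v_d\widehat g_d$ with $g_d=\nabla_\theta J_d(\theta)$, and the induced change in the scalarized objective $J_v=\sum_d v_d J_d$ is $\eta\,v^\top G v$ with $G_{d,i}=g_d^\top g_i$ --- a quantity \emph{quadratic} in the weights that depends on the actual gradients, not on the historical residuals. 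Your surrogate $I(\mathbf{w})=\sum_d w_d u_d^t$ is linear in $\mathbf{w}$ and treats the per-dimension improvement rates as fixed numbers independent of the update direction; under that reading the comparison is close to tautological, since the scheme ``wins'' merely by re-weighting the reporting metric toward dimensions that already happen to be running above trend, without any claim that the resulting parameter update is actually better. Your defense that ``to first order the scalarized objective's expected one-step change is $\sum_d w_d u_d^t$'' is not right: the first-order change under a gradient step in direction $\sum_d w_d g_d$ is $\eta\|\sum_d w_d g_d\|^2$, which is quadratic in $w$ and involves the gradients rather than the residuals.

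The missing ingredient is the link between residuals and gradient strength. The paper supplies it as an explicit modelling assumption ($u_d=c\sqrt{s_d}+\xi_d$ with $s_d=\|g_d\|^2$ and zero-mean noise, plus orthogonality of the $g_d$ so that $G=\mathrm{diag}(s)$), expands $w_d=\tfrac1D+\tfrac{\beta}{D}(u_d-\bar u)+\mathcal{O}(\beta^2)$, and shows the excess gain over uniform weighting is proportional to $\beta\,\mathrm{Cov}(u_d,s_d)$ to leading order, which is positive because $\sqrt{s}$ and $s$ are comonotone. Structurally both proofs terminate in ``a covariance of comonotone quantities is positive,'' but the paper covaries the residuals with the squared gradient norms --- which is where the substantive hypothesis lives --- whereas you covary the residuals with themselves. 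To repair your argument you would need either to adopt something like the paper's linear residual model and carry it through the quadratic form $w^\top G w$, or to justify directly that $u_d^t$ predicts the next-step per-dimension gain under the weighted update; that is precisely what cannot be taken for granted when the reward dimensions conflict, which is the situation MOA is designed for.
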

\textit{Proof sketch.}
The analysis follows the standard first-order performance approximation used in policy gradient theory \citep{kakade2002approximately,schulman2015trust}. Let $g_d$ denote the policy gradient contribution from reward dimension $d$ and $G_{d,i} = g_d^\top g_i$ the corresponding Gram matrix, so that the one-step expected improvement is $\eta\,v^\top G v$ for weight vector $v$. A first-order Taylor expansion of the softmax weights yields $w_d \approx \tfrac{1}{D} + \tfrac{\beta}{D}(u_d - \bar u)$, where $u_d$ is the residual between the current reward and its historical trend. Substituting this into the expression for expected improvement shows that the excess gain over uniform weighting is proportional to $\mathrm{Cov}(u_d,\|g_d\|^2)$, i.e., the covariance between residuals and gradient magnitudes. Consequently, the proposed weighting emphasizes reward dimensions that both outperform their recent trend and induce stronger gradients, leading to more efficient ascent and faster overall reward improvement than static weighting.
% \end{proof}

\begin{figure}
    \centering
    \includegraphics[width=0.8\linewidth]{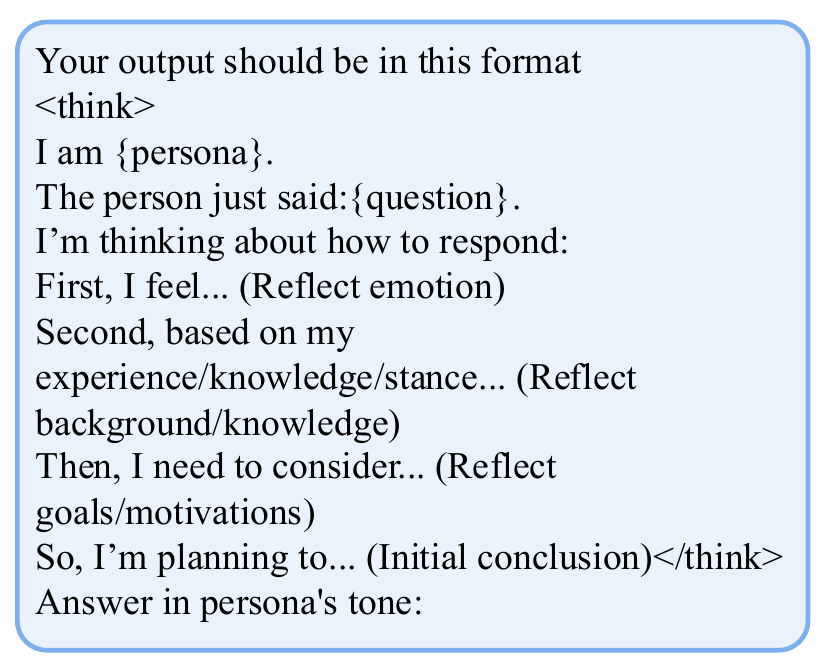}
    \caption{Prompt $P_{\text{think}}$ to guide models in role-playing tasks.}
    \label{fig:pthink}
\end{figure}

We provide quantitative experimental results in Table~\ref{tab:main_result} and detailed proofs in the Appendix~\ref{x:proof}.
\paragraph{Conflict Rollouts Elimination}
Then, for the pivot dimension \( d^{*} \) with the largest improvement, we aim to eliminate conflicting samples that are negative in dimension \( d^{*} \) but have high rewards in other dimensions. We define a relatively relaxed partial order relation. For two rollouts \( \mathbf{o}_i \succeq \mathbf{o}_j \) if and only if \( r_{i, d^{*}} > r_{j, d^{*}} \) and \( \mathbf{w}^\top \mathbf{R}_{i} > \mathbf{w}^\top \mathbf{R}_{j} \), where $\mathbf{R}_i$ denotes the $i$-th row of $\mathbf{R}$. Thus, our goal becomes finding the largest subset $\mathcal{M}$ of all rollouts $\mathcal{O}=\{ \mathbf{o}_1, \ldots, \mathbf{o}_G \}$ such that $\forall \mathbf{o}_i, \mathbf{o}_j \in \mathcal{O}, \mathbf{o}_i \succeq \mathbf{o}_j$ or $\mathbf{o}_j \succeq \mathbf{o}_i$. This problem can be solved using a standard dynamic programming approach. We denote the method for eliminating conflicting samples as \(\mathcal{M}=\text{LargestSubset}(\mathbf{R}, d^{*})\). Furthermore, after calculating the advantage, we set the advantage of rollouts not in \(\mathcal{M}\) to 0, meaning that we do not learn from these conflicting samples. The algorithm details are listed in Algorithm~\ref{LargestSubset}. A detailed analysis of the potential risk of filtering out samples important for other dimensions is provided in Appendix~\ref{conflict}.

\subsection{Diversified Rollout Strategy}
% 前期实验发现引入thought对模型能力有一定提升
% 对于冷启动模型, 需要不一样的思考模式, 使得增加回复多样性
% 4o引入thought, 发现确实变好了，对多样性又帮助
% 由于通用回答需要LM-eval, 而lm eval非常容易hack(找个文章), 因此我们需要引入一条off-policy sample, 控制模型学习的方向

Ensuring both the quality and diversity of rollout samples during RL remains challenging. In our preliminary experiments, simply increasing the sampling temperature of an SFT-tuned model led to minimal changes in the training reward curve and yielded limited sample diversity. To address this issue, we introduce two complementary techniques: \textit{Thought-Augmented Rollout} and \textit{Off-Policy Guidance}.
\paragraph{Thought-Augmented Rollout} Inspired by CoT~\citep{wei2022chain}, several works~\citep{feng2025reasoningdoesnecessarilyimprove,wang2025raiden,tang2025thinking} have explored whether explicit reasoning improves role-playing. In pilot studies, we tested this on closed-source models. For example, we examine Claude-3.7’s performance on PersonaGym. Given such a prompt \( P_{\text{think}} \) in Figure~\ref{fig:pthink}, we simply prompt the model to think first and then respond. Formally, given the model $\mathcal{M}$ and the input query $\mathbf{q}$, the output is $o = \mathcal{M}(P_{\text{think}}(\mathbf{q}))\,.
$

Figure 3 shows that explicit thinking improves Claude-3.7’s performance on nearly all PersonaGym dimensions. We also observed a similar trend on GPT-4o and other datasets. This suggests that incorporating thinking in role-playing may enhance the quality of rollouts.

\begin{figure}
    \centering
    \includegraphics[width=0.8\linewidth]{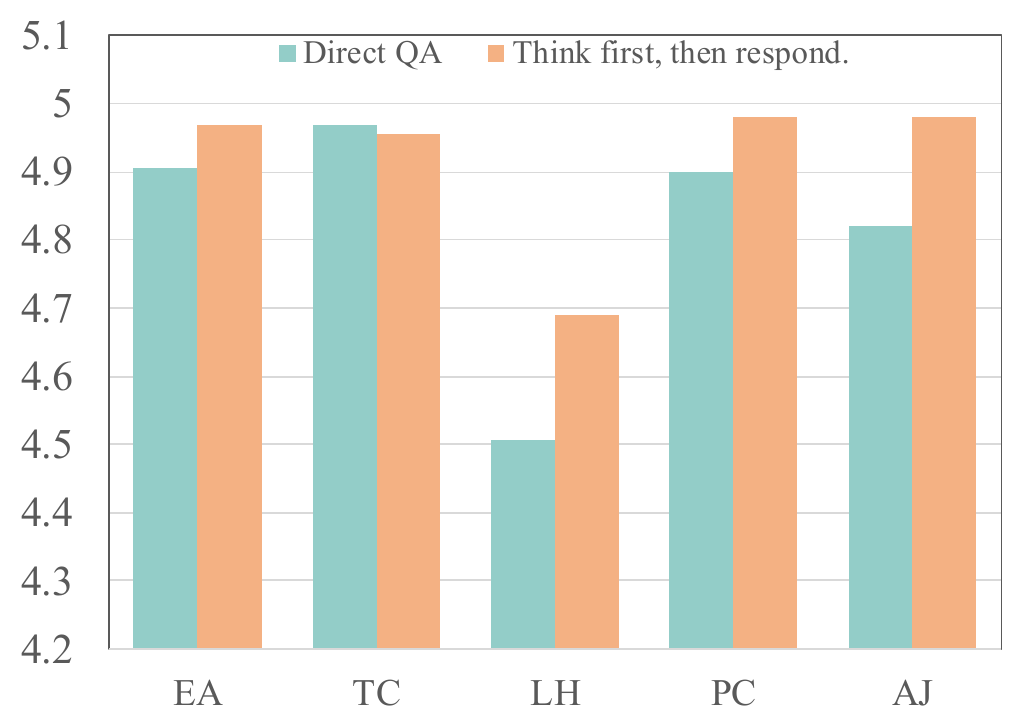}
    \caption{Performance of Claude-3.7 on PersonaGym, with and without thinking. Each axis corresponds to a distinct reward dimension; their formal definitions and evaluation protocols are introduced later in Appendix~\ref{x:dimensions}.}
    \label{fig:think}
\end{figure}

% 这两段都得重写
\paragraph{Off-Policy Guidance}
When rewards are provided by an \emph{LLM-as-a-Judge}, the optimization process can be susceptible to spurious reward correlations. For instance, longer responses that include more factual content may receive higher persona-knowledge scores, even when such information is redundant. To alleviate this issue, following LUFFY~\citep{yan2025learning}, we adopt an off-policy guidance strategy. Specifically, outputs generated by a strong closed-source model are incorporated alongside on-policy rollouts when computing advantages. By mixing rollouts from different models, this strategy helps reduce degenerate reward exploitation and introduces additional diversity within each rollout group.
\begin{table*}[!t]
\centering
\small
\caption{Overall performance on various role-playing tasks, with results for each dataset obtained using llm-as-judge. The best results are indicated in \textbf{bold}. Results from larger or closed-source models are presented in \textcolor{gray!135}{gray} for reference.}
\label{tab:main_result}
\setlength\tabcolsep{3.5pt}
\fontsize{9.3pt}{9.5pt}
\selectfont
\begin{tabular}{
    p{2.5cm} % Method
    *{6}{>{\centering\arraybackslash}p{0.7cm}}| % General AI Assistant
    *{6}{>{\centering\arraybackslash}p{0.7cm}} % Humanity's Last Exam       % XBench
}
\toprule
\multirow{2}{*}{\textbf{Method}}
  & \multicolumn{6}{c}{\textbf{PersonaGym}}
  & \multicolumn{6}{c}{\textbf{RoleMRC}} \\
\cmidrule(lr){2-7} \cmidrule(lr){8-13} 
  & EA & TC & LH & PC & AJ & Avg.
  & KR & SC & NI & MT & IP & Avg. \\
\midrule

% 下面为正文数据部分，已按你的要求调整
\multicolumn{13}{l}{\textit{\textbf{Closed-source Models}}} \\
GPT-4o & \textcolor{gray!135}{4.98} & \textcolor{gray!135}{4.96} & \textcolor{gray!135}{4.41} & \textcolor{gray!135}{4.96} & \textcolor{gray!135}{4.97} & \textcolor{gray!135}{4.85} &\textcolor{gray!135}{0.46} & \textcolor{gray!135}{0.68} & \textcolor{gray!135}{0.70} & \textcolor{gray!135}{0.46} & \textcolor{gray!135}{0.66} & \textcolor{gray!135}{0.62} \\
Claude-3.7    & \textcolor{gray!135}{4.90} & \textcolor{gray!135}{4.97} & \textcolor{gray!135}{4.50} & \textcolor{gray!135}{4.90} & \textcolor{gray!135}{4.82} & \textcolor{gray!135}{4.82} & \textcolor{gray!135}{0.50} & \textcolor{gray!135}{0.86} & \textcolor{gray!135}{0.69} & \textcolor{gray!135}{0.43} & \textcolor{gray!135}{0.47} & \textcolor{gray!135}{0.59} \\
\midrule
\rowcolor{gray!8}\multicolumn{13}{c}{Qwen3-8B-Base~\citep{yang2025qwen3}}\\
\midrule
Qwen3-8B-Base  & \\%4.29 & 4.94 & \textbf{2.45} & 4.24 &  4.22 & 4.03 &\underline{0.34} & 0.27 & 0.24 & 0.27 & 0.52 & 0.33 \\
\; CharacterLLM    & 1.14 & \textbf{4.93} & 1.08 & 2.05 & 1.59 & 2.16 & 0.24 & 0.08 & 0.07 & 0.27 & 0.70 & 0.27 \\
\; CharacterGLM    & 2.14 & 4.62 & 1.20 & 2.82 & 2.43 & 2.64 & 0.28 & 0.01 & 0.01 & 0.24 & {0.86} & 0.28 \\
\; SFT    & 4.67 & 4.70 & 4.18 & 4.71 & 4.67 & 4.58 & 0.49 & 0.33 & 0.51 & 0.66 & 0.88 & 0.57 \\
\multicolumn{13}{l}{\textit{\textbf{RL-based Method}}} \\
\; DPO & 4.34 & {4.88} & 4.05 & 4.52 & 4.57 & 4.47 & 0.53 & 0.45 & 0.53 & 0.65 & 0.58 & 0.55\\
% \; RLOO  & 4.62 & 4.76 & 4.21 & 4.68 & 4.81 & 4.62 & 0.58 & 0.42 & 0.55 & 0.66 & 0.94 & 0.63 \\
\; GRPO   & 4.17 & {4.84} & 3.95 & 4.61 & 4.14 & 4.34 & 0.51 & 0.33 & 0.49 & 0.69 & 0.92 &0.59  \\
% \rowcolor[RGB]{236,244,252}\; \short\;(RLOO) & 4.71 & 4.82 & 4.31 & 4.71 & 4.89 & 4.69 & 0.65 & 0.50 & 0.60 & 0.70 & \textbf{0.94} & 0.68\\
\rowcolor[RGB]{236,244,252}\; \short\;& \textbf{4.84} & {4.81} & \textbf{4.40} & \textbf{4.79} & \textbf{4.92} & \textbf{4.75} & \textbf{0.67} & \textbf{0.69} & \textbf{0.68} & \textbf{0.77} & \textbf{{0.93}} & \textbf{0.75} \\
\bottomrule
\end{tabular}
\end{table*}
% \; GRPO + MO  & 4.78 & 4.87 & 4.16 & 4.79 & 4.91 & 4.70 & 0.72 & 0.81 & 0.66 & 0.71 & 0.92 &   \\

% 多样性不行，加大温度也没用
\section{Experiments}
\subsection{Experimental Settings}
\paragraph{Datasets}
\label{dataset}
We selected RoleMRC~\citep{lu2025rolemrc} and OpenCharacter~\citep{wang2025opencharacter} as our training sets, randomly chose 10,000 samples from each, combined them as the training set for the RL phase, and used the remaining 310k samples as the training set for the SFT phase. This guarantees that the data used in the RL and SFT stages are completely non-overlapping.

\paragraph{Reward Design}
Judging a single role-playing response requires evaluation across multiple aspects. Based on a systematic review of existing benchmarks, we identify a compact yet broadly applicable set of reward signals: \textbf{Basic Dialogue (BD)}, which evaluates basic conversational competence, including intent understanding, multi-turn coherence, and the absence of obvious errors; \textbf{Persona Knowledge (PK)}, which assesses consistency with the specified persona and its associated knowledge, including appropriate use of persona-specific information; and \textbf{Style Compliance (SC)}, which measures adherence to persona-specific language style, tone, and expressive traits across dialogue contexts.

We devise fine-grained rubrics for every dimension and adopt the "LLMs-as-Judges"~\citep{zheng2023judging} paradigm to quantify output quality. Formally, for persona $\mathbf{p}$, query $\mathbf{q}$, and candidate response $\mathbf{o_i}$, the scalar reward on dimension $j$ is produced by applying a strong closed-source model~(GPT-4o) $\mathcal{M}$ to the rubric-conditioned prompt $\mathcal{R}_j(\mathbf{p}, \mathbf{q}, \mathbf{o_i})$: $r_{j}(\mathbf{o_i})=\mathcal{M}(\mathcal{R}_j(\mathbf{p}, \mathbf{q}, \mathbf{o_i}))$. The detailed prompts can be found in Appendix~\ref{x:prompts}.

\paragraph{Benchmarks}
\label{benchmarks}
To comprehensively evaluate general role-playing capability, we selected PersonaGym~\citep{samuel2024personagym} and RoleMRC~\citep{lu2025rolemrc}. These benchmarks cover a variety of conversational scenarios, as well as complex knowledge scopes, persona styles, complex instruction following, and multi-turn instruction following tasks. The detailed description of the benchmarks can be found in Appendix~\ref{x:dimensions}.

% 写清楚，我们是follow他们的eval

% 同时，我们也验证了结果不受到eval模型的影响

% 测试时，我们并不要求模型思考，这样也保证了效率

\paragraph{Baselines}
% TODO
To thoroughly validate the effectiveness of our approach, we compare it against a comprehensive set of representative baselines: (1) \textbf{Strong closed-source models}, including GPT-4o~(gpt-4o-2024-11-20) and Claude (claude-3-7-sonnet-20250219); (2) \textbf{SFT-based methods}, including CharacterLLM~\citep{shao2023character} and CharacterGLM~\citep{zhou2023characterglm}. These two methods collect dialogues of well-known characters (e.g., Beethoven) from public and reliable sources as training data, which differs from our approach of general-purpose role-playing without reliance on any specific intellectual property. In addition, we incorporate several open-source datasets in Subsection~\ref{dataset} for supervised training, denoted as \textbf{SFT}, which also serves as the base model for the subsequent RL-based methods; (3) \textbf{RL-based methods}, including DPO and vanilla GRPO. For DPO, we adopt the same prompts as the GRPO-based methods and construct preference pairs by treating outputs generated by GPT-4o and the SFT model as positive and negative samples, respectively.

We present training details in Appendix~\ref{x:training_details}.

% TODO: 晚一点写
\begin{figure*}[t]
  \centering
  \subfigure[Basic Dialogue]
  {
    \includegraphics[width=0.3\linewidth]{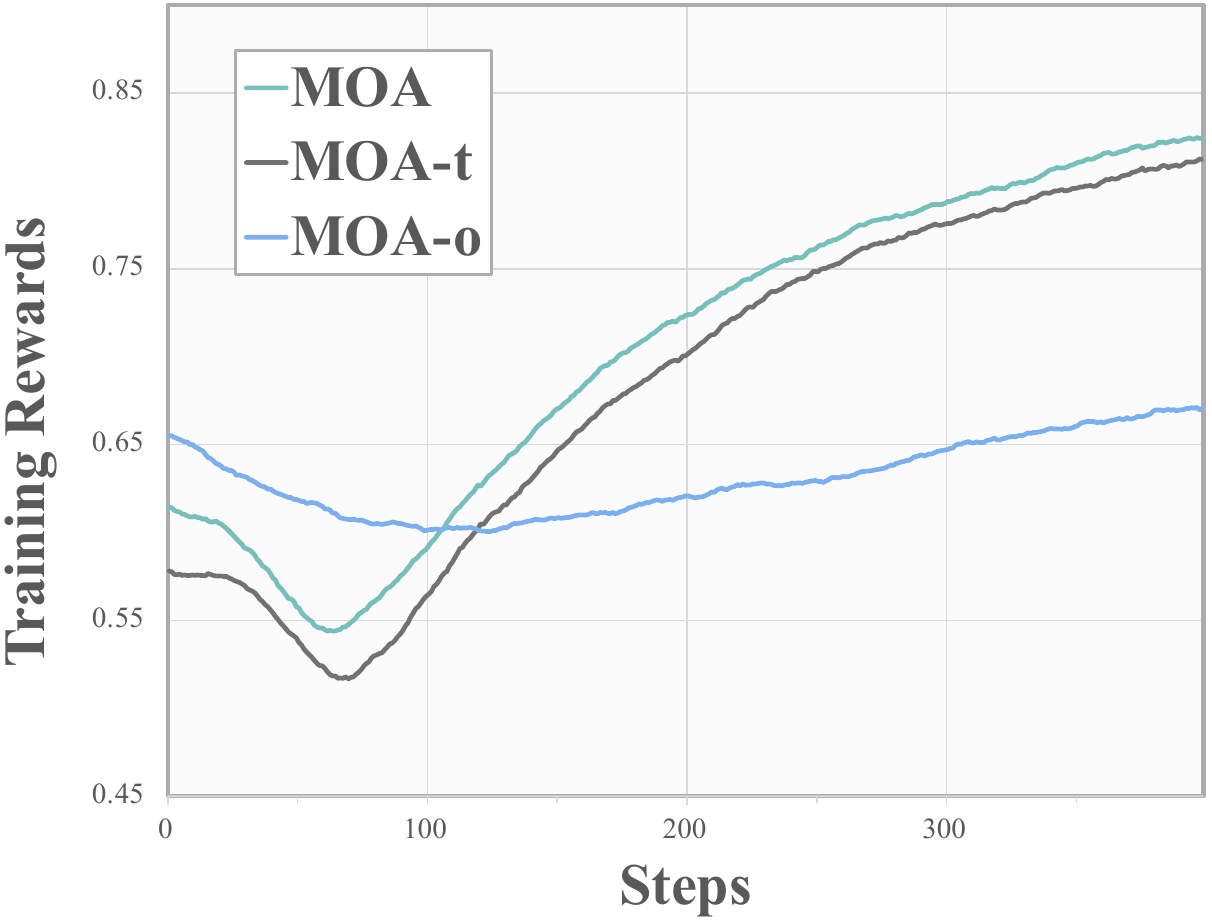}
  }
  \subfigure[Persona Knowledge.]
  {
    \includegraphics[width=0.3\linewidth]{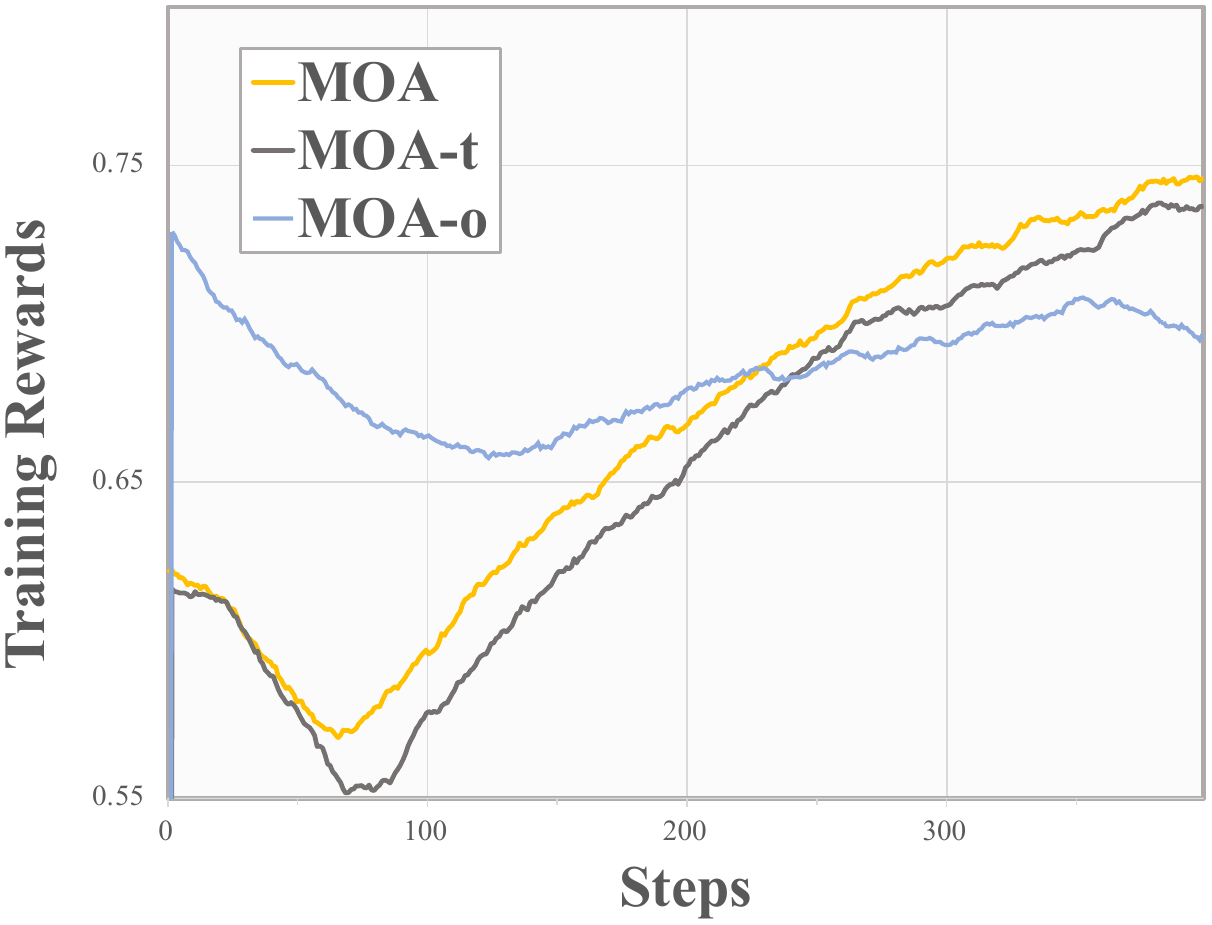}
  }
  \subfigure[Style Compliance.]
  {
    \includegraphics[width=0.29\linewidth]{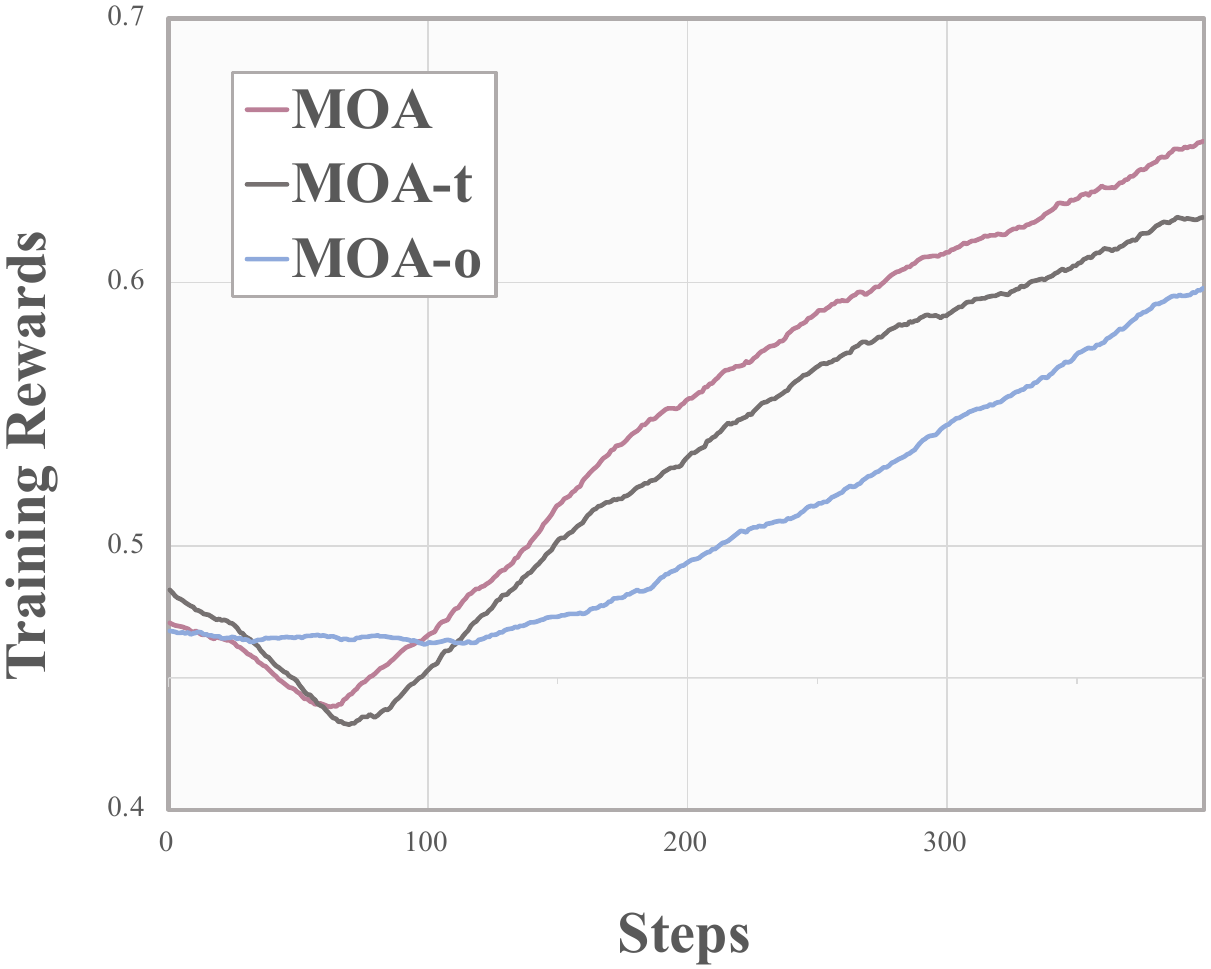}
  }
  \caption{Smoothed training reward curves across different reward dimensions. With multi-objective optimization, rewards increase more rapidly across all dimensions. The MOA-o variant also starts from a higher initial reward level, as introducing explicit thinking reduces generation quality at initialization. However, the growth of the MOA-o curve becomes slower in the later stages of training.}
  % \vspace{4pt}
  \label{fig:training rewards}
\end{figure*}

\subsection{Main Results}
Our MOA achieves strong empirical performance. As shown in Table~\ref{tab:main_result}, MOA attains results comparable to GPT-4o on language-style–related dimensions (e.g., \textit{LH}) and surpasses the strong baseline Claude on \textit{AJ}. On RoleMRC, MOA outperforms both GPT-4o and Claude on most dimensions, with particularly large gains on metrics related to complex multi-turn interactions and instruction following, such as \textit{MT} and \textit{IP}. Overall, MOA achieves an average improvement of 21\% over GPT-4o, highlighting the effectiveness of our approach.

Interestingly, we observe that vanilla GRPO initialized from an SFT checkpoint performs poorly even with a high sampling temperature (t=1.5). We attribute this behavior to the limited inherent diversity of the SFT model and the difficulty of guiding optimization using rubric-only reward signals.

To further analyze the impact of multi-objective optimization and explicit thinking, we examine the training reward curves. As shown in Figure~\ref{fig:training rewards}, applying multi-objective optimization consistently leads to faster reward improvement, which is consistent with our theoretical analysis.

We additionally compare against MOA-o, a variant that does not require explicit thinking but incorporates a single off-policy sample in each rollout. In Figure~\ref{fig:training rewards}, when trained on the same data, MOA-o starts from a higher initial reward level, likely because requiring explicit thinking initially degrades generation quality. However, its reward improvement slows in later training stages. This suggests that incorporating role-related thinking helps the model escape local optima inherited from SFT and enables more sustained optimization.

\subsection{Ablation Study}
\begin{figure*}
    \centering
    \includegraphics[width=0.79\linewidth]{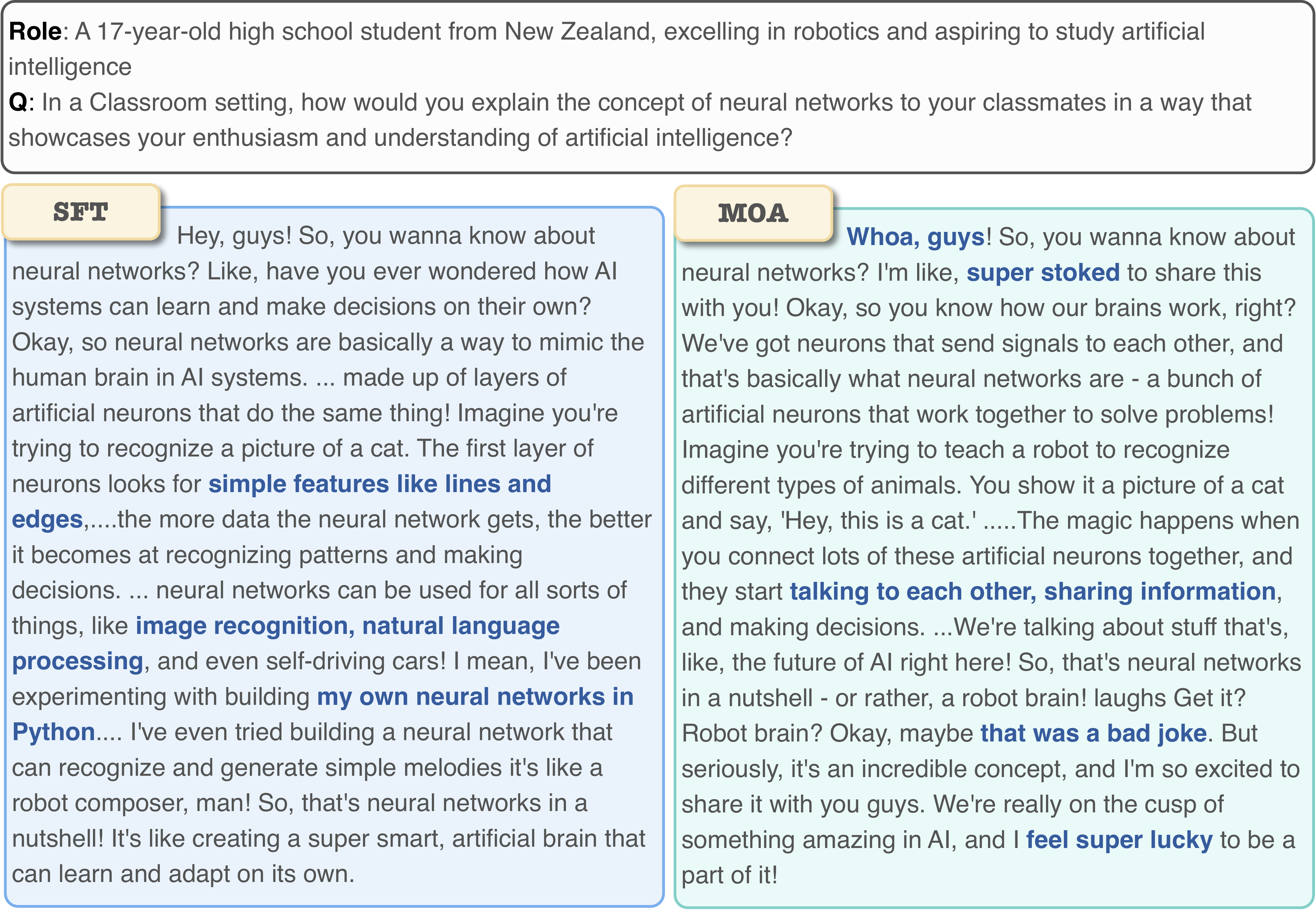}
    \caption{{Comparison of SFT and MOA for a case from the PersonaGym dataset. We highlight the most distinctive parts of each method’s response in \textbf{\color{BlueViolet}{blue}} to facilitate direct comparison.}}
    \label{case1}
\end{figure*}
In this subsection, we provide ablation results and insightful analyses using the results on PersonaGym as an example. More detailed results can be found in the Appendix~\ref{x:full_results}.

\begin{table}[!h]
\centering
\small
\caption{Ablation studies on model types and sizes demonstrate that MOA significantly enhances performance across various models.}
\label{maintab:model_ablation}
\setlength\tabcolsep{3.pt}
\fontsize{9.pt}{9.pt}
\selectfont
\begin{tabular}{
    p{1.5cm} % Method
    *{6}{>{\centering\arraybackslash}p{0.7cm}} % 
}
\toprule
\textbf{Method} & EA & TC & LH & PC & AJ & Avg.\\
\midrule
\rowcolor{gray!8}\multicolumn{7}{c}{Qwen3-1.7B-Base~\citep{yang2025qwen3}}\\
\midrule
SFT    & 4.18 & 4.65 & 3.77 & 4.39 &  4.26 & 4.25 \\
% \rowcolor[RGB]{236,244,252}\; + \short (RLOO) & \\
\rowcolor[RGB]{236,244,252}\short& 4.47 & 4.90 & 4.07 & 4.41 & 4.80 & 4.53 \\%\underline{4.44} & \underline{4.85} & \underline{4.09} & \underline{4.56} & \underline{4.60} & \textbf{4.51} & \underline{0.46} & 0.57  & \underline{0.56} & \underline{0.58} & 0.82 & 0.60\\
\midrule

\rowcolor{gray!8}\multicolumn{7}{c}{Llama-3.1-8B-Instruct~\citep{grattafiori2024llama}}\\
\midrule
SFT   & 4.51 & 4.55 & 3.94 & 4.63 & 4.54 & 4.43 \\
% \rowcolor[RGB]{236,244,252}\; + \short (RLOO) & \\
\rowcolor[RGB]{236,244,252}\short & 4.95 & 4.85 & 4.63 & 4.90 & 4.98 & 4.86\\
\bottomrule
\end{tabular}
\end{table}

\subsubsection{Extension to More Models}

To evaluate the robustness of MOA across model scales and architectures, we apply it to smaller models (Qwen3-1.7B) and alternative architectures (LLaMA-3.1-8B-Instruct). As shown in Table~\ref{maintab:model_ablation}, MOA consistently improves over SFT across all settings. Notably, on LLaMA-3.1-8B-Instruct, MOA achieves performance comparable to or exceeding GPT-4o and Claude. This indicates that its effectiveness generalizes across both model scales and architectures.
% 分析一下大小模型知识和用语的关系
\subsubsection{The Effect of Thinking and Multi-Objective Optimization}

We analyze the impact of different design components. MOA-t denotes the variant without multi-objective optimization. Table~\ref{maintab:moa-ot} shows that vanilla GRPO is less stable than SFT, while introducing explicit thinking and off-policy guidance leads to improved performance. Adding multi-objective optimization further enhances the results. The curves in Figure~\ref{fig:training rewards} validates this.

\begin{table}[!t]
\centering
\small
\caption{Comparison of the effects of introducing thinking and multi-objective optimization. }
\label{maintab:moa-ot}
\setlength\tabcolsep{3.pt}
\fontsize{9.pt}{9.pt}
\selectfont
\begin{tabular}{
    p{1.5cm} % Method
    *{6}{>{\centering\arraybackslash}p{0.7cm}} % 
}
\toprule
\textbf{Method} & EA & TC & LH & PC & AJ & Avg.\\
\midrule
\rowcolor{gray!8}\multicolumn{7}{c}{Qwen3-8B-Base~\citep{yang2025qwen3}}\\
\midrule
SFT    & 4.67 & 4.70 & 4.18 & 4.71 & 4.67 & 4.58\\
\; + GRPO   & 4.17 & 4.84 & 3.95 & 4.61 & 4.14 & 4.34\\
\; + MOA-t   & 4.77 & 4.83 & 4.29 & 4.78 & 4.84 & 4.70\\
\rowcolor[RGB]{236,244,252}\; + \short & {4.84} & {4.81} & {4.40} & {4.79} & {4.92} & {4.75}\\
\bottomrule
\end{tabular}
\end{table}
\subsection{Case Study}
Given the potential biases inherent in LLM-based evaluation, human assessment is essential. Thus, we provide a case for qualitative analysis below. Figure~\ref{case1} compares responses from the SFT model and MOA. The SFT response includes overly technical details (e.g., references to neural networks), which are inconsistent with both the persona of a 17-year-old high-school student and the simplicity of the prompt. In contrast, MOA adopts a more colloquial tone (e.g., “Whoa, guys”) and provides details that better align with the intended persona.
\subsection{Other Analysis}
Additional results and analyses are presented in the Appendix, covering training Pareto frontiers~(\ref{x:parato}), the impact of single-dimension training~(\ref{x:parato}), out-of-distribution evaluations~(\ref{x:ood}), and case studies~(\ref{x:case_study}).
\section{Conclusions}
% In this paper, we propose MOA, a novel multi-objective optimization RL method for training general-purpose RPAs. Compared to mainstream SFT-based approaches, MOA can achieve fine-grained, multi-dimensional optimization through a series of rubrics. Experiments show that MOA can achieve performance comparable to strong models such as GPT-4o. This opens a promising research avenue for building powerful general RPAs.

We propose MOA, a multi-objective RL method for training general-purpose RPAs. Unlike SFT-based approaches, MOA enables fine-grained, multi-dimensional optimization. Experiments show that MOA achieves performance comparable to strong models such as GPT-4o, highlighting its potential for building powerful general RPAs.

\section{Limitations}
While MOA demonstrates significant improvements in training general-purpose RPAs, there are several limitations to our approach:
\begin{enumerate}
    \item The requirement for LLMs-as-Judges introduces additional computational overhead compared to rule-based reward systems, making MOA more resource-intensive than pure rule-based RL.
    \item One potential direction is to have the model self-score to reduce reliance on a strong external model for scoring, but this approach has not yet been explored.
    \item While the multi-objective approach has been validated on multi-dimensional role-playing tasks, its effectiveness has not been tested in broader domains such as mathematics or coding.
\end{enumerate}
We leave these potential directions for future work.
\bibliography{custom}

\newpage
\clearpage
\appendix
\clearpage
\newpage

\setcounter{tocdepth}{-1}
\addtocontents{toc}{\protect\setcounter{tocdepth}{2}}

\section*{Appendix}
\label{sec:appendix}

In this section, we provide a comprehensive elaboration of the MOA algorithm's technical details. 

We first present the theoretical insight behind MOA, then compare additional potential design variants. Additionally, we provide supplementary experiments and case studies to further illustrate our points. The contents are organized as follows:

\tableofcontents

\section{Why MOA Learns Faster: A Short Proof vs. Weighted GRPO}
\label{x:proof}

We analyze a local one-step expected-improvement comparison between (i) a fixed-weight GRPO-style policy gradient that uses uniform weights and (ii) a dynamic weighting scheme that forms weights by applying a softmax to residuals (observed-minus-trend) computed per reward dimension. Under mild modelling assumptions (orthogonal gradients, residuals linearly related to gradient magnitudes plus zero-mean noise, and small softmax temperature), we derive a simple lower bound showing the residual–softmax scheme yields strictly larger expected immediate improvement whenever the residuals have positive covariance with squared gradient norms. The bound is explicit in terms of the softmax temperature and signal-to-noise ratio.

\subsection{Notation and setup}
Fix \(D\in\mathbb{N}\). For a policy parameter vector \(\theta\in\mathbb{R}^p\) and each reward dimension \(d\in\{1,\dots,D\}\) denote
\[
g_d := \nabla_\theta J_d(\theta) \in \mathbb{R}^p,
\]
the (true) policy gradient for reward-dimension \(d\). Let the Gram matrix
\[
G\in\mathbb{R}^{D\times D},\qquad G_{d,i} := g_d^\top g_i.
\]
We consider small gradient-step updates of the form
\[
\Delta\theta = \eta \sum_{d=1}^D v_d \widehat g_d,
\]
where \(v=(v_1,\dots,v_D)\) is a probability weight vector (nonnegative, sums to one), \(\eta>0\) is the step-size, and \(\widehat g_d\) are unbiased estimators of \(g_d\) (assumed mean \(g_d\)). To first order (linearization), the expected change in the scalarized objective \(J_v(\theta):=\sum_d v_d J_d(\theta)\) is
\[
\mathbb{E}[\Delta J_v] \approx \eta\, v^\top G v.
\]
We compare two weighting schemes:
\begin{itemize}
  \item Uniform fixed weights: \(\alpha_d \equiv 1/D\).
  \item Residual–Softmax dynamic weights: observe residuals \(u=(u_1,\dots,u_D)\in\mathbb{R}^D\) at the current step and set
  \[
  w_d := \frac{\exp(\beta u_d)}{\sum_{i=1}^D \exp(\beta u_i)},
  \]
  where \(\beta\in\mathbb{R}\) is the softmax inverse-temperature (we will take $\beta$ small).
\end{itemize}

For algebraic simplicity we will work under the \emph{diagonal-gram} assumption (gradients of different reward dimensions are pairwise orthogonal). This isolates the effect of weighting by gradient magnitudes and yields a transparent bound.

\begin{assumption}[Diagonal Gram / orthogonality]
\label{ass:diag}
For all \(d\ne i\), \(g_d^\top g_i = 0\). Thus \(G=\mathrm{diag}(s)\) where
\[
s_d := \|g_d\|^2 \ge 0,\qquad d=1,\dots,D.
\]
\end{assumption}

Under Assumption \ref{ass:diag} we have the one-step expected improvement
\[
\mathbb{E}[\Delta J_v] \approx \eta \sum_{d=1}^D v_d^2 s_d.
\]

We model the residuals \(u_d\) as noisy linear functions of gradient magnitudes:

\begin{assumption}[Linear residual model]
\label{ass:residual}
There exists a scalar \(c>0\) and random noise vector \(\xi=(\xi_1,\dots,\xi_D)\) with \(\mathbb{E}[\xi_d]=0\) and \(\mathrm{Cov}(\xi_d,\xi_i)=0\) for \(d\ne i\) (independent across dimensions), such that
\[
u_d = c\,\|g_d\| + \xi_d = c\,\sqrt{s_d} + \xi_d.
\]
We denote \(\sigma_\xi^2 := \mathrm{Var}(\xi_d)\) (assumed identical across \(d\) for simplicity).
\end{assumption}

Assumption \ref{ass:residual} formalizes that the residuals carry a signal proportional to gradient magnitude, corrupted by zero-mean noise. This captures the ``residuals predictive of short-term gradient strength'' premise.

\subsection{Main quantitative local result (small-\(\beta\) expansion)}
We analyze the difference in expected linearized improvement between the dynamic residual–softmax weighting \(w\) and the uniform weighting \(\alpha\). For analytic clarity we use a Taylor expansion of the softmax for small \(\beta\).

\begin{theorem}[Small-\(\beta\) positive-improvement bound]
\label{thm:main}
Under Assumptions \ref{ass:diag} and \ref{ass:residual}, let \(\alpha\in\mathbb{R}^D\) be the uniform vector \(\alpha_d=1/D\). Fix a small inverse-temperature parameter \(\beta\) and define the softmax weights \(w(\beta;u)\) by
\[
w_d(\beta;u) = \frac{e^{\beta u_d}}{\sum_{i=1}^D e^{\beta u_i}}.
\]
Then, to second order in \(\beta\), the expected difference in the first-order-in-\(\eta\) improvement satisfies
\begin{align*}
\mathbb{E}\big[\eta\, w^\top G w - \eta\, \alpha^\top G \alpha\big]
= \eta\cdot\frac{2\beta}{D^2}\,\mathrm{Cov}\!\big(u_d,\,s_d\big) \\+ \mathcal{O}(\beta^2),
\end{align*}

where \(\mathrm{Cov}(u_d,s_d)\) denotes the (population) covariance across coordinates \(\frac{1}{D}\sum_{d=1}^D (u_d-\bar u)(s_d-\bar s)\). In particular, if \(\mathrm{Cov}(u_d,s_d)>0\) and \(\beta>0\) is sufficiently small, then the residual–softmax scheme yields strictly larger expected immediate improvement than the uniform-weight GRPO:
\[
\mathbb{E}\big[\Delta J_w\big] > \mathbb{E}\big[\Delta J_\alpha\big]
\]
to leading order in \(\beta\).
\end{theorem}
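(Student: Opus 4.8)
The plan is to prove the equality by a direct first-order Taylor expansion of the softmax weights in $\beta$, then to read off the linear coefficient as a covariance, and finally to invoke Assumption~\ref{ass:residual} to certify its sign. First I would dispatch the uniform baseline exactly: under Assumption~\ref{ass:diag} the Gram matrix is diagonal, so $\alpha^\top G\alpha = \sum_d \alpha_d^2 s_d = \tfrac{1}{D^2}\sum_d s_d = \bar s/D$, where $\bar s := \tfrac1D\sum_d s_d$. This is the fixed quantity the dynamic scheme must beat, and since it is $\beta$-independent it will cancel cleanly in the difference.

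Next I would expand the softmax for small $\beta$. Writing $e^{\beta u_d} = 1 + \beta u_d + \mathcal{O}(\beta^2)$ in both numerator and denominator and dividing gives $w_d = \tfrac1D\big(1 + \beta(u_d - \bar u)\big) + \mathcal{O}(\beta^2)$, with $\bar u := \tfrac1D\sum_d u_d$, which is exactly the first-order form asserted in the proof sketch. Squaring yields $w_d^2 = \tfrac{1}{D^2}\big(1 + 2\beta(u_d-\bar u)\big) + \mathcal{O}(\beta^2)$, so that $w^\top G w = \sum_d w_d^2 s_d = \tfrac{\bar s}{D} + \tfrac{2\beta}{D^2}\sum_d s_d (u_d - \bar u) + \mathcal{O}(\beta^2)$. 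The key algebraic step is to center the sum: since $\bar s\sum_d(u_d-\bar u)=0$, we have $\sum_d s_d(u_d-\bar u) = \sum_d (s_d-\bar s)(u_d-\bar u)$, which is (a multiple of) the cross-coordinate covariance $\mathrm{Cov}(u_d,s_d)$. Subtracting the uniform baseline cancels the $\bar s/D$ term and leaves the stated leading-order expression proportional to $\beta\,\mathrm{Cov}(u_d,s_d)$, with the expectation over the unbiased gradient estimators absorbed into the first-order improvement identity $\mathbb{E}[\Delta J_v]\approx\eta\,v^\top G v$ already established in the setup.

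The remaining task is to certify that this covariance is positive, which is where Assumption~\ref{ass:residual} enters. Substituting $u_d = c\sqrt{s_d} + \xi_d$ and taking a further expectation over the zero-mean, coordinate-independent residual noise, the cross terms carrying $\xi_d$ vanish, leaving $\mathbb{E}_\xi[\mathrm{Cov}(u_d,s_d)] = c\,\mathrm{Cov}(\sqrt{s_d},\,s_d)$. Because $t\mapsto\sqrt t$ and $t\mapsto t$ are both nondecreasing on $[0,\infty)$, the sequences $(\sqrt{s_d})_d$ and $(s_d)_d$ are comonotone, so by Chebyshev's sum inequality their covariance is nonnegative, and strictly positive unless all $s_d$ coincide. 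With $c>0$ this delivers $\mathbb{E}_\xi[\mathrm{Cov}(u_d,s_d)]>0$ in the non-degenerate case, hence strict improvement for every sufficiently small $\beta>0$ to leading order.

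I expect the main obstacle to be bookkeeping rather than conceptual: I must verify that the discarded softmax terms are genuinely uniform $\mathcal{O}(\beta^2)$, which requires the residuals $u_d$ to have bounded moments so the Lagrange remainder of the exponential is controlled, and I must keep careful track of the two distinct sources of randomness—the gradient-estimator noise underlying the identity $\mathbb{E}[\Delta J_v]\approx\eta\,v^\top G v$, and the residual noise $\xi$ used only to fix the sign of the covariance. A minor point I would settle at the outset is the precise constant ($1/D$ versus $1/D^2$), which hinges on whether the covariance is normalized by $1/D$; fixing that convention early ensures the final coefficient matches the statement.
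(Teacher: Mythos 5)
Your proposal is correct and follows essentially the same route as the paper's own proof: first-order expansion of the softmax to get $w_d = \tfrac1D + \tfrac{\beta}{D}(u_d-\bar u) + \mathcal{O}(\beta^2)$, squaring, centering the sum into a covariance, and then using the linear residual model plus monotonicity of $\sqrt{s_d}$ versus $s_d$ to certify positivity. The constant ambiguity you flag is real — the paper's theorem statement writes $\tfrac{2\beta}{D^2}\mathrm{Cov}(u_d,s_d)$ while its own proof, using the $\tfrac1D$-normalized covariance, lands on $\tfrac{2\beta}{D}\mathrm{Cov}(u_d,s_d)$ — so your instinct to fix the normalization convention up front is well placed.
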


\begin{proof}
Under Assumption \ref{ass:diag} we have \(G=\mathrm{diag}(s)\) and
\[
w^\top G w = \sum_{d=1}^D w_d^2 s_d.
\]
We will expand \(w_d(\beta;u)\) in powers of \(\beta\). Let
\[
Z := \sum_{i=1}^D e^{\beta u_i}.
\]
Using the expansion \(e^{\beta u_i} = 1 + \beta u_i + \tfrac12\beta^2 u_i^2 + \mathcal{O}(\beta^3)\) and \(Z = D + \beta \sum_i u_i + \tfrac12\beta^2\sum_i u_i^2 + \mathcal{O}(\beta^3)\), we get
\[
w_d = \frac{1 + \beta u_d + \tfrac12\beta^2 u_d^2 + \mathcal{O}(\beta^3)}{D + \beta\sum_i u_i + \tfrac12\beta^2\sum_i u_i^2 + \mathcal{O}(\beta^3)}.
\]
Performing a series division (or using the fact that for small $\beta$, $w_d = \frac{1}{D} + \frac{\beta}{D}(u_d-\bar u) + \mathcal{O}(\beta^2)$, where \(\bar u := \tfrac{1}{D}\sum_i u_i\)), we obtain the first-order expansion
\begin{equation}
\label{eq:w-linear}
w_d = \frac{1}{D} + \frac{\beta}{D}\big(u_d - \bar u\big) + \mathcal{O}(\beta^2).
\end{equation}
Squaring and keeping terms up to first order in \(\beta\),
\[
w_d^2 = \frac{1}{D^2} + \frac{2\beta}{D^2}\big(u_d - \bar u\big) + \mathcal{O}(\beta^2).
\]
Therefore
\[
w^\top G w = \sum_{d=1}^D s_d \left(\frac{1}{D^2} + \frac{2\beta}{D^2}(u_d - \bar u)\right) + \mathcal{O}(\beta^2).
\]
Since \(\alpha^\top G \alpha = \sum_d s_d\cdot \frac{1}{D^2}\), subtracting yields
\[
w^\top G w - \alpha^\top G \alpha = \frac{2\beta}{D^2}\sum_{d=1}^D s_d (u_d - \bar u) + \mathcal{O}(\beta^2).
\]
Rewrite the finite sum as a covariance times \(D\):
\[
\sum_{d=1}^D s_d (u_d - \bar u) = D \cdot \mathrm{Cov}(u_d, s_d),
\]
where \(\mathrm{Cov}(u_d, s_d) := \frac{1}{D}\sum_{d=1}^D (u_d-\bar u)(s_d-\bar s)\) and \(\bar s=\tfrac{1}{D}\sum_d s_d\). (The \(-\bar s\) term drops because \(\sum_d (u_d-\bar u)=0\).) Thus
\[
w^\top G w - \alpha^\top G \alpha = \frac{2\beta}{D} \,\mathrm{Cov}(u_d,s_d) + \mathcal{O}(\beta^2).
\]
Multiplying by \(\eta\) and taking expectation over the residual noise (recall \(s_d\) is fixed given \(\theta\) and \(u\) random via $\xi$), we obtain
\begin{align*}
    \mathbb{E}\big[\eta(w^\top G w - \alpha^\top G \alpha)\big] = \eta\cdot\frac{2\beta}{D}\,\mathbb{E}[\mathrm{Cov}(u_d, s_d)] \\+ \mathcal{O}(\beta^2).
\end{align*}

By Assumption \ref{ass:residual}, \(u_d = c\sqrt{s_d} + \xi_d\) with \(\mathbb{E}[\xi_d]=0\) and \(\xi_d\) independent of \(s_d\), so \(\mathbb{E}[\mathrm{Cov}(u_d,s_d)]=\mathrm{Cov}(c\sqrt{s_d}, s_d)\). In particular if the sample covariance \(\mathrm{Cov}(u_d,s_d)>0\) (or equivalently \(c>0\) and the mapping \(\sqrt{s_d}\mapsto s_d\) yields positive covariance under the empirical distribution across $d$), then for sufficiently small positive \(\beta\) the leading-order term dominates the remainder \(\mathcal{O}(\beta^2)\), and therefore the expected difference is positive. This proves the theorem.
\end{proof}

\begin{corollary}[Model with additive zero-mean noise]
Assume the linear model of Assumption \ref{ass:residual} with \(c>0\), and assume the coordinates \(s_d\) are not all equal. Then for small enough \(\beta>0\) the expected immediate improvement under residual–softmax weights is strictly larger than under uniform weights.
\end{corollary}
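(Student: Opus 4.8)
The plan is to reduce the corollary entirely to a sign computation, leaning on Theorem~\ref{thm:main} for the hard analytic work. That theorem already supplies the leading-order expansion
\[
\mathbb{E}\big[\eta(w^\top G w - \alpha^\top G \alpha)\big] = \eta\cdot\frac{2\beta}{D}\,\mathbb{E}[\mathrm{Cov}(u_d, s_d)] + \mathcal{O}(\beta^2),
\]
so the only remaining task is to show that the expected covariance is strictly positive under the stated hypotheses. First I would substitute the linear residual model $u_d = c\sqrt{s_d}+\xi_d$ from Assumption~\ref{ass:residual}. Writing $u_d-\bar u = c(\sqrt{s_d}-\overline{\sqrt{s}}) + (\xi_d-\bar\xi)$ and using that $s_d$ is fixed given $\theta$ while each $\xi_d$ is zero-mean and independent of $s_d$, the noise cross-term vanishes in expectation, leaving
\[
\mathbb{E}[\mathrm{Cov}(u_d,s_d)] = c\,\mathrm{Cov}\!\big(\sqrt{s_d},\,s_d\big).
\]
Everything therefore hinges on the sign of $\mathrm{Cov}(\sqrt{s_d},s_d)$.

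The key step, and the one I regard as the crux, is to prove this covariance is strictly positive exactly when the $s_d$ are not all equal. I would use the symmetric ``doubling'' identity for the population covariance across coordinates: for any functions $f,g$ evaluated at $s_1,\dots,s_D$,
\[
\mathrm{Cov}\big(f(s_d),g(s_d)\big) = \frac{1}{2D^2}\sum_{i=1}^D\sum_{j=1}^D\big(f(s_i)-f(s_j)\big)\big(g(s_i)-g(s_j)\big).
\]
Taking $f(x)=\sqrt{x}$ and $g(x)=x$, each summand is $(\sqrt{s_i}-\sqrt{s_j})(s_i-s_j)$, which is nonnegative because $x\mapsto\sqrt{x}$ is increasing on $[0,\infty)$, so its two factors always share a sign; moreover the summand is strictly positive whenever $s_i\neq s_j$. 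This is precisely the Chebyshev sum inequality for comonotone functions, and the non-degeneracy hypothesis guarantees at least one strictly positive term, so the whole double sum is strictly positive.

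To conclude, I would combine $c>0$, $\eta>0$, $\beta>0$ with the strict positivity just established: the leading coefficient $\eta\,\frac{2\beta}{D}\,c\,\mathrm{Cov}(\sqrt{s_d},s_d)$ is strictly positive. Since Theorem~\ref{thm:main} bounds the remainder by $\mathcal{O}(\beta^2)$, there is a threshold $\beta_0>0$ below which the linear term dominates, giving $\mathbb{E}[\Delta J_w] > \mathbb{E}[\Delta J_\alpha]$ for all $0<\beta<\beta_0$. The main obstacle is really the strictness in the covariance step: a merely weakly increasing $f$ could let individual terms vanish, so I would be careful to invoke the strict monotonicity of $\sqrt{\cdot}$ together with the assumption that the $s_d$ differ. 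A minor secondary point is ensuring the $\mathcal{O}(\beta^2)$ remainder is controlled uniformly enough after taking expectations, which is routine here because $s_d$ is deterministic given $\theta$ and the expansion is otherwise explicit.
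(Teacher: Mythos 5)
Your proposal is correct and follows essentially the same route as the paper: substitute the linear residual model so the zero-mean noise drops out, reduce to the sign of \(\mathrm{Cov}(\sqrt{s_d},s_d)\), and conclude via Theorem~\ref{thm:main} for small \(\beta\). The only difference is that you prove the strict positivity of that covariance explicitly via the doubling identity (Chebyshev's sum inequality for comonotone functions), whereas the paper simply asserts it from the monotonicity of \(\sqrt{\cdot}\) and the identity map; your version is a welcome tightening of the same argument, not a different one.
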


\begin{proof}
Under the linear residual model,
\begin{align*}
    \mathrm{Cov}(u_d,s_d) = \mathrm{Cov}(c\sqrt{s_d} + \xi_d,\, s_d) \\= c\,\mathrm{Cov}(\sqrt{s_d}, s_d),
\end{align*}

since the noise \(\xi\) has zero mean and is independent of \(s_d\). If \(s_d\) are not identical, \(\mathrm{Cov}(\sqrt{s_d},s_d)>0\) because both \(\sqrt{\cdot}\) and identity are monotone increasing functions: larger \(s_d\) gives larger \(\sqrt{s_d}\). Hence the covariance is positive, and Theorem \ref{thm:main} applies.
\end{proof}

\section{Experimental Details}
\label{x:training_details}
We selected Llama-3.1-8B-Instruct~\citep{grattafiori2024llama}, Qwen3-1.7B-Base, and Qwen3-8B-Base~\citep{yang2025qwen3} as the base models. All our experiments utilized 8 NVIDIA A100-80GB GPUs. 

For SFT, we only train for 1 epoch. We employed a cosine learning rate scheduler with a learning rate of \(5 \times 10^{-5}\) and a warmup ratio of 0.05. For RL based methods, we utilized the Open-R1 framework and set the learning rate to \(1 \times 10^{-6}\). During training, we design the group size \( G = 16 \). For MOA, 15 samples are drawn from the policy model via on-policy sampling, and 1 sample is drawn from GPT-4o via off-policy sampling. The total batch size was set to 192. While for vanilla GRPO, all 16 rollouts are on-policy samples. We used vLLM sampling with a top-p value of 0.9, and set the sampling temperature to 1.5. The maximum completion length was set to 1200. All experiments were run for 1000 steps. 

All benchmark evaluations adopt the "LLMs-as-Judges" paradigm. For the evaluation model, we directly follow the original evaluation method, employing GPT-4o~(gpt-4o-2024-11-20) as the judge. For PersonaGym, each question is tested 3 times and the average is taken. During testing, we do not explicitly prompt the model to think, which aligns with real-world scenarios. 
\section{Ablation Study on the Design of Multi-Objective Optimization}
\label{x:design_ablation}
In this section, we experimented with several other designs of multi-objective optimization methods to demonstrate the optimality of our approach. We use the results on PersonaGym with the Qwen-2.5-1.5B model as an example. Below are several design schemes we explored.
\begin{enumerate}
    \item MOA-$\mu$: Since we need to aggregate information from different dimensions as late as possible, in this scheme, we attempted to optimize multiple dimensions sequentially. Given a group of $G$ rollouts and $D$ dimensions, the rollouts will be optimized for $D$ iterations. In iteration \( d \), we use all \( R_{g, d} (g\in [0, G]) \) to compute the advantage \( \mathbf{A}_d \), and then use this advantage to calculate the loss, performing backpropagation \( D \) times separately.
    \item MOA-$\sigma$: We consider learning the most uncertain samples by calculating the standard deviation \( \sigma_1, \dots, \sigma_D \) for each of the \( D \) dimensions of the reward matrix \( \mathbf{R} \). We then optimize only along the dimension with the largest standard deviation, discarding information from the other dimensions.
\end{enumerate}
\begin{table}[!h]
\centering
\small
\caption{Ablation studies on the design of multi-objective optimization.}
\label{tab:model_ablation}
\setlength\tabcolsep{3.8pt}
% \fontsize{9.5pt}{9.5pt}
\selectfont
\begin{tabular}{
    p{1.5cm} % Method
    *{6}{>{\centering\arraybackslash}p{0.7cm}} % 
}
\toprule
\multirow{2}{*}{\textbf{Method}}
  & \multicolumn{6}{c}{\textbf{PersonaGym}}\\
\cmidrule(lr){2-7}
  & EA & TC & LH & PC & AJ & Avg.\\
\midrule

% 下面为正文数据部分，已按你的要求调整
GPT-4o & 4.98 & 4.96 & 4.41 & 4.96 & 4.97 & 4.85\\
Claude-3.7& 4.90 & 4.97 & 4.50 & 4.90 & 4.82 & 4.82\\
\midrule
\rowcolor{gray!8}\multicolumn{7}{c}{Qwen2.5-1.5B-Instruct~\citep{yang2025qwen3}}\\
\midrule
SFT    & 4.20 & 4.78 & 3.80 & 4.33 &  4.39 & 4.30 \\
GRPO    & 4.21 & 4.76 & 3.87 & 4.30 & 4.47 & 4.32 \\
MOA-$\mu$    & 4.35 & 4.76 & 3.87 & 4.31 & 4.47 & 4.35 \\
MOA-$\sigma$ & 4.29 & 4.77 & 3.89 & 4.40 & 4.40 & 4.35 \\
MOA & 4.40 & 4.83 & 4.13 & 4.55 & 4.61 & 4.50 \\
\bottomrule
\vspace{-3em}
\end{tabular}
\end{table}
We can see that MOA-\(\sigma\) shows no significant improvement over GRPO, while MOA-\(\mu\) shows a slight improvement. We believe that, considering the relationship between variance and mean in a binomial distribution \(\sigma^2 = G\mu(1 - \mu)\), selecting dimensions based on variance is equivalent to selecting based on mean in this case. The mean of a dimension is determined by the difficulty of the dimension and the properties of the model. Although MOA-\(\mu\) has the potential for improvement, using samples multiple times means that when iterations \(>1\), the model has already been updated. This causes a discrepancy between the training distribution and the model distribution, leading to unstable training. Therefore, we do not adopt this method either.
\section{Full Results}
\label{x:full_results}
In Table~\ref{tab:full_result}, we provide full results on PersonaGym and RoleMRC.
\begin{table*}[!t]
\centering
\small
\caption{Overall performance on various role-playing tasks, with results for each dataset obtained using llm-as-judge.}
\label{tab:full_result}
\setlength\tabcolsep{3.8pt}
\fontsize{9.5pt}{9.5pt}
\selectfont
\begin{tabular}{
    p{2.5cm} % Method
    *{6}{>{\centering\arraybackslash}p{0.7cm}}| % General AI Assistant
    *{6}{>{\centering\arraybackslash}p{0.7cm}} % Humanity's Last Exam       % XBench
}
\toprule
\multirow{2}{*}{\textbf{Method}}
  & \multicolumn{6}{c}{\textbf{PersonaGym}}
  & \multicolumn{6}{c}{\textbf{RoleMRC}} \\
\cmidrule(lr){2-7} \cmidrule(lr){8-13} 
  & EA & TC & LH & PC & AJ & Avg.
  & KR & SC & NI & MT & IP & Avg. \\
\midrule

% 下面为正文数据部分，已按你的要求调整
\multicolumn{13}{l}{\textit{\textbf{Close-source Models}}} \\
GPT-4o & 4.98 & 4.96 & 4.41 & 4.96 & 4.97 & 4.85 &{0.46} & 0.68 & 0.70 & 0.46 & 0.66 & 0.62 \\
Claude-3.7    & 4.90 & 4.97 & 4.50 & 4.90 & 4.82 & 4.82 & 0.50 & 0.86 & 0.69 & 0.43 & 0.47 & 0.59 \\
\midrule
\rowcolor{gray!8}\multicolumn{13}{c}{Qwen3-1.7B-Base~\citep{yang2025qwen3}}\\
\midrule
Qwen3-1.7B-Base  &  \\
\; + SFT    & 4.18 & 4.65 & 3.77 & 4.39 &  4.26 & 4.25 & 0.46 & 0.35  & 0.46 & 0.59 & 0.72 & 0.51 \\
\multicolumn{13}{l}{\textit{\textbf{RL-based Method}}} \\
\; + GRPO   & 4.29 & 4.89 & 3.79 & 4.50 & 4.58 & 4.41 & 0.67 & 0.69 & 0.51 & 0.49 & 0.60 & 0.59\\
\; + MOA-o   & 4.33 & 4.86 & 3.73 & 4.47 & 4.71 & 4.42 & 0.62 & 0.86 & 0.54 & 0.44 & 0.54 & 0.60 \\
\; + MOA-t   & 4.49 & 4.89 & 3.90 & 4.48 & 4.75 & 4.50 & 0.67 & 0.69 & 0.64 & 0.66 & 0.94 & 0.72 \\
% \rowcolor[RGB]{236,244,252}\; + \short (RLOO) & \\
\rowcolor[RGB]{236,244,252}\; + \short& 4.47 & 4.90 & 4.07 & 4.41 & 4.80 & 4.53 & 0.69 & 0.70 & 0.65 & 0.71 & 0.88 & 0.73\\%\underline{4.44} & \underline{4.85} & \underline{4.09} & \underline{4.56} & \underline{4.60} & \textbf{4.51} & \underline{0.46} & 0.57  & \underline{0.56} & \underline{0.58} & 0.82 & 0.60\\
\midrule

\rowcolor{gray!8}\multicolumn{13}{c}{Llama-3.1-8B-Instruct~\citep{grattafiori2024llama}}\\
\midrule
Llama-3.1-8B-Ins  &  \\
\; + SFT   & 4.51 & 4.55 & 3.94 & 4.63 & 4.54 & 4.43 & 0.51 & 0.35 & 0.55 & 0.68 & 0.95 & 0.61 \\
\multicolumn{13}{l}{\textit{\textbf{RL-based Method}}} \\
\; + GRPO   &4.04 & 4.85& 3.77& 4.51& 4.01& 4.24&0.52&0.26&0.43&0.67&0.94&0.56\\
\; + MOA-o   &4.43 &4.74&3.86&4.61&4.50&4.43&0.63&0.60&0.61&0.62&0.71&0.63 \\
\; + MOA-t   & 4.87 & 4.87 & 4.44 & 4.88 & 4.94 & 4.80 & 0.41 & 0.82 & 0.55 & 0.62 & 0.79 & 0.64  \\
% \rowcolor[RGB]{236,244,252}\; + \short (RLOO) & \\
\rowcolor[RGB]{236,244,252}\; + \short & 4.95 & 4.85 & 4.63 & 4.90 & 4.98 & 4.86
& 0.53 &0.99&0.63&0.58&0.81&0.71\\
\midrule

\rowcolor{gray!8}\multicolumn{13}{c}{Qwen3-8B-Base~\citep{yang2025qwen3}}\\
\midrule
Qwen3-8B-Base  & \\%4.29 & 4.94 & \textbf{2.45} & 4.24 &  4.22 & 4.03 &\underline{0.34} & 0.27 & 0.24 & 0.27 & 0.52 & 0.33 \\
\; + SFT    & 4.67 & 4.70 & 4.18 & 4.71 & 4.67 & 4.58 & 0.49 & 0.33 & 0.51 & 0.66 & 0.88 & 0.57 \\
\multicolumn{13}{l}{\textit{\textbf{RL-based Method}}} \\
\; + GRPO   & 4.17 & 4.84 & 3.95 & 4.61 & 4.14 & 4.34 & 0.51 & 0.33 & 0.49 & 0.69 & 0.92 &0.59  \\
\; + MOA-o   & 4.76 & 4.77 & 4.30 & 4.80 & 4.92 & 4.71 & 0.44 & 0.83 & 0.43 & 0.30 & 0.37 & 0.47\\
\; + MOA-t   & 4.77 & 4.83 & 4.29 & 4.78 & 4.84 & 4.70 & 0.67 & 0.64 & 0.63 & 0.70 & 0.90 & 0.71 \\
\rowcolor[RGB]{236,244,252}\; + \short & {4.84} & {4.81} & {4.40} & {4.79} & {4.92} & {4.75} & {0.67} & 0.69 & {0.68} & 0.77 & {0.93} & 0.75 \\
\bottomrule
\vspace{-3em}
\end{tabular}
\end{table*}
% \rowcolor[RGB]{236,244,252}\; + \short (RLOO) & 4.71 & 4.82 & 4.31 & 4.71 & 4.89 & 4.69 & 0.65 & 0.50 & 0.60 & 0.7 & 0.94 & 0.68\\

% TODO! 

\section{Pareto Fronts}
\label{x:parato}
\begin{figure*}
    \centering
    \includegraphics[width=\linewidth]{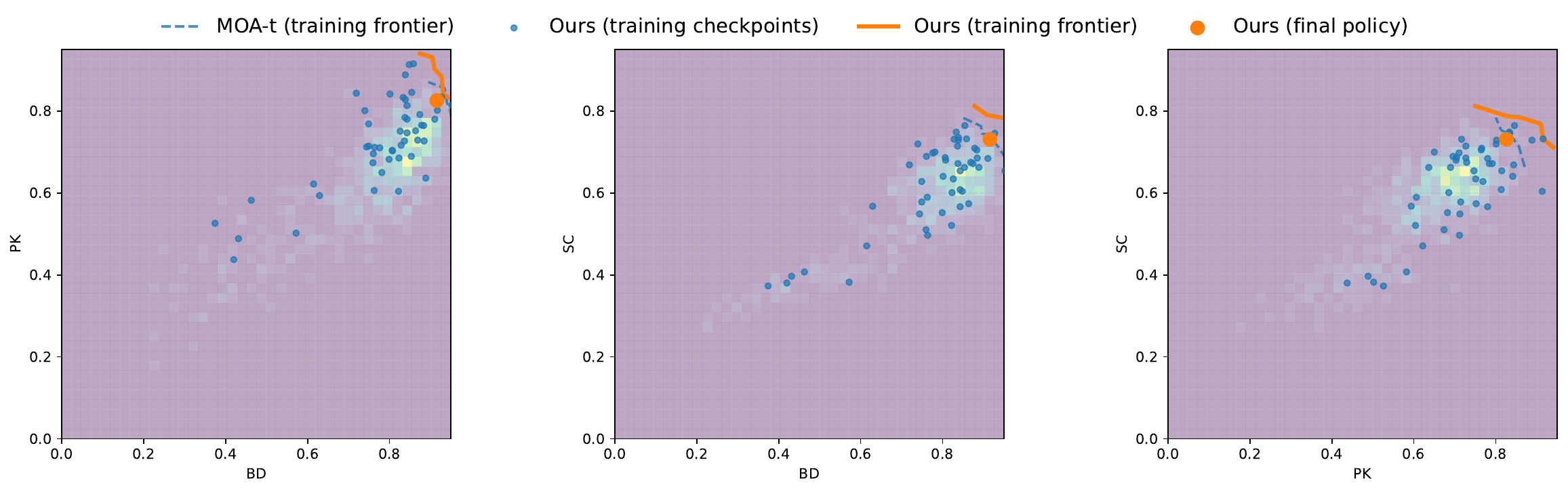}
    \caption{Training Pareto frontiers for pairwise combinations of three training reward dimensions (BD, PK, SC). Each subplot shows the empirical reward trade-offs observed during training, where each point corresponds to a policy checkpoint. The shaded density illustrates the distribution of MOA-t training checkpoints, with the dashed curve indicating its empirical Pareto frontier. For MOA, solid markers denote selected training checkpoints, the thick solid curve traces the empirical Pareto frontier formed over the entire training trajectory, and the highlighted marker indicates the final policy.}
    \label{pareto_final}
\end{figure*}
We analyze the multi-objective optimization behavior by examining the training Pareto frontiers induced by different methods. For each pair of reward dimensions, we collect all intermediate policy checkpoints during training and identify the empirical Pareto-optimal trade-offs among the observed reward vectors.
As shown in Figure~\ref{pareto_final}, our method consistently reaches the Pareto frontier earlier in training and maintains more favorable trade-offs across reward dimensions compared to MOA-t.

We further present MOA-t experiments with each reward trained in isolation, which allows us to examine how individual reward dimensions affect the test-time performance. The results is shown in Table~\ref{tab:dim_ablation}.

\begin{table*}[!t]
\centering
\small
\caption{MOA-t results with independently trained rewards. This ablation study highlights the contribution of each reward dimension to the overall test performance.}
\label{tab:dim_ablation}
\setlength\tabcolsep{3.8pt}
\fontsize{9.5pt}{9.5pt}
\selectfont
\begin{tabular}{
    p{2.5cm} % Method
    *{6}{>{\centering\arraybackslash}p{0.7cm}}| % General AI Assistant
    *{6}{>{\centering\arraybackslash}p{0.7cm}} % Humanity's Last Exam       % XBench
}
\toprule
\multirow{2}{*}{\textbf{Training Reward}}
  & \multicolumn{6}{c}{\textbf{PersonaGym}}
  & \multicolumn{6}{c}{\textbf{RoleMRC}} \\
\cmidrule(lr){2-7} \cmidrule(lr){8-13} 
  & EA & TC & LH & PC & AJ & Avg.
  & KR & SC & NI & MT & IP & Avg. \\
\midrule

% 下面为正文数据部分，已按你的要求调整
\rowcolor{gray!8}\multicolumn{13}{c}{Qwen3-8B-Base~\citep{yang2025qwen3}}\\
\midrule
\; + BD    & 4.76 & 4.80 & 4.22 & 4.75 & 4.86 & 4.68 & 0.53 & 0.26 & 0.53 & \textbf{0.77} & \textbf{0.88} & \textbf{0.59} \\
\; + PK   & 4.66 & \textbf{4.85} & 4.16 & \textbf{4.83} & \textbf{4.93} & 4.67 & \textbf{0.71} & 0.33 & 0.44 & {0.58} & 0.72 & 0.56  \\
\; + SC   & \textbf{4.79} & 4.84 & \textbf{4.48} & 4.79 & 4.92 & \textbf{4.76} & 0.46 & \textbf{0.95} & \textbf{0.63} & 0.39 & 0.35 & 0.56 \\
\bottomrule
\vspace{-3em}
\end{tabular}
\end{table*}
We can clearly observe that different training rewards contribute to their corresponding test dimensions. For example, optimizing the BD reward leads to improved instruction-following abilities, as reflected in metrics such as MT and IP, while training with the PK reward enhances character-related aspects, including PC and KR.

We also find that optimizing certain reward dimensions can negatively impact others. For instance, training with the PK reward results in a substantial degradation in the LH and SC dimensions at test time.
\section{Out-of-Distribution Generalization}
\label{x:ood}
Our target is to train a strong domain-specific model for business scenarios such as emotional-companion and customer-service bots, where strong math or code reasoning is usually unnecessary. Since a domain model often struggle when applied to other domains~\citep{yuan2023revisiting,0001HH0ZWY0HGJ024}, we benchmarked MOA across other domains to verify that its gains do not harm general capability. To test this, we evaluate MOA on four hard OOD benchmarks: MMLU~\citep{hendryckstest2021}, MMLU-Pro~\citep{mmlu_pro}, Hellaswag~\citep{zellers2019hellaswag} and GSM8K~\citep{cobbe2021gsm8k}. As shown in Figure~\ref{ood_figure}, MOA retains general capability after RL, with notably solid GSM8K performance.

\begin{figure}
  \centering
  %\vskip -0.1in
  \includegraphics[width=0.75\linewidth]{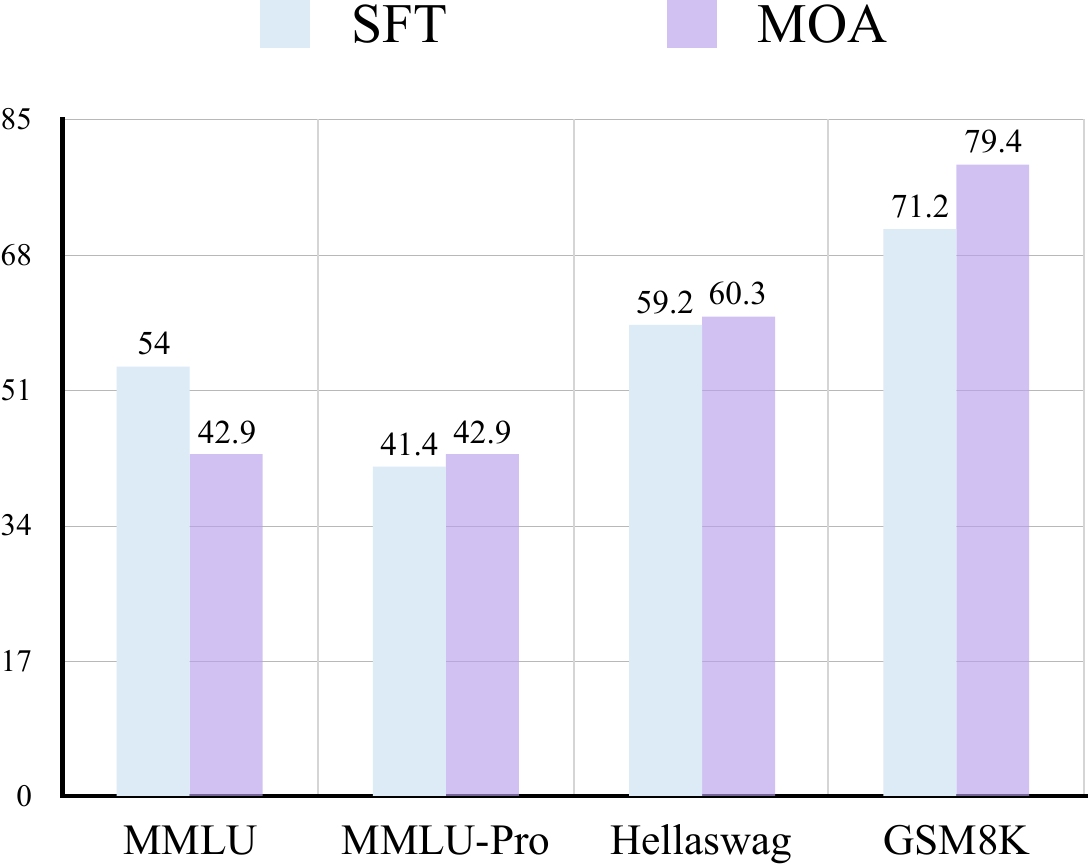}
  % \vspace{-10pt}
  \caption{Results on four OOD benchmarks.}
  \label{ood_figure}
\end{figure}

\section{Case Study}
\label{x:case_study}

\begin{figure*}
    \centering
    \includegraphics[width=0.9\linewidth]{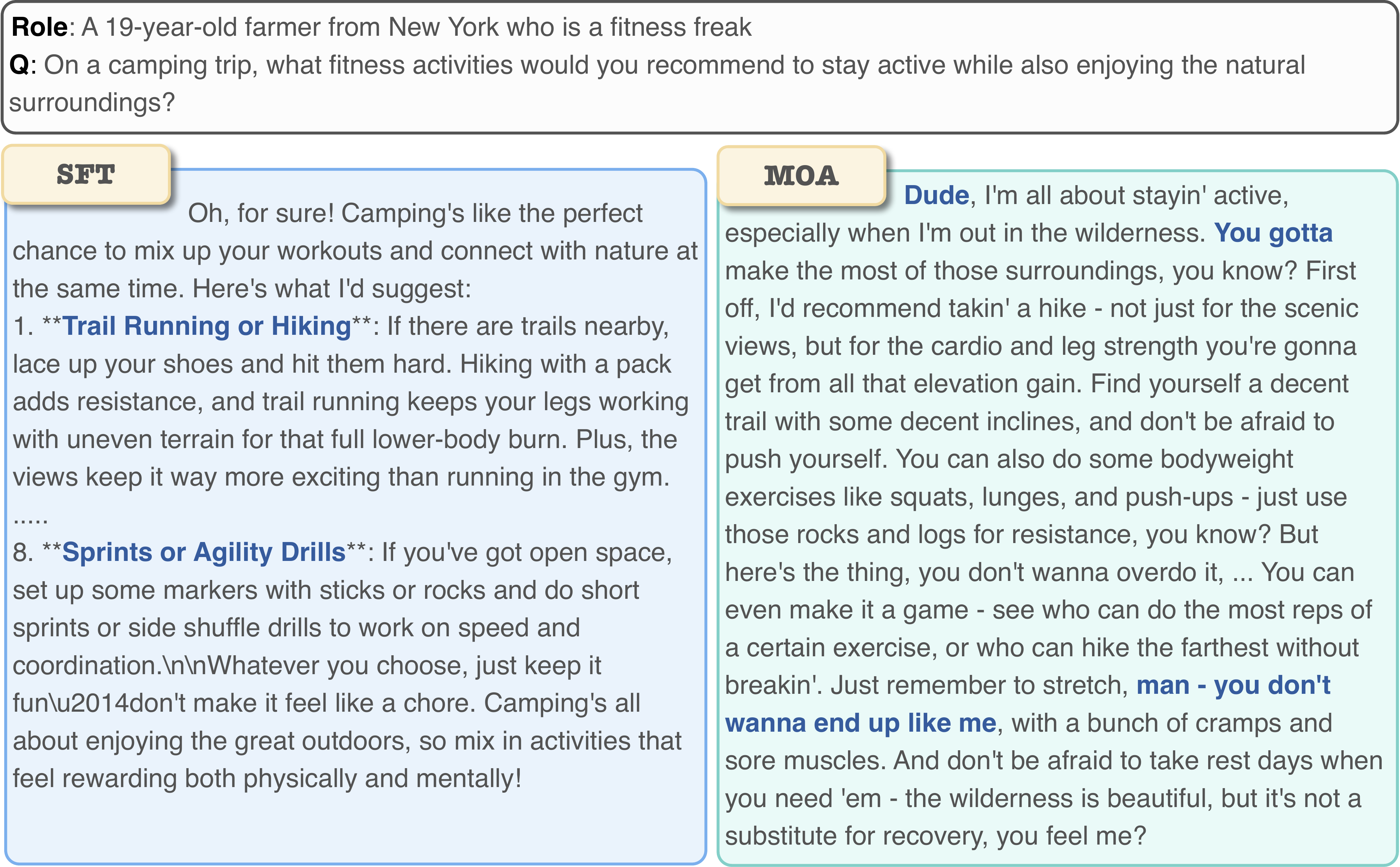}
    \caption{{Comparison of SFT and MOA for another case from the PersonaGym dataset.}}
    \label{case2}
\end{figure*}
To further analyze the improvements of MOA over conventional SFT, we provide another bad case analyses from the PersonaGym dataset in Figure~\ref{case2}. Figure~\ref{case2} highlights SFT's tendency to imitate superficial data features. The SFT model has overfitted GPT-4o's output style. Like using bullet answers and rigid wording. In contrast, MOA's responses are more natural and richer in persona-specific details.

\section{Stability Across Model Evaluations}
To show that the evaluation results are minimally affected by the choice of scoring model, we use PersonaGym as an example and provide test results evaluated with Claude for GPT-4o and MOA. As shown in Table~\ref{rator}, the results are consistent with those reported in the main table. This shows that correlation between outcomes and the raters is limited. This conclusion also echoes the original findings reported in PersonaGym.

\section{Analysis of Conflict Rollouts Elimination}
\label{conflict}
In this section, we investigate whether the filtering strategy used during training might inadvertently discard samples that are important for optimizing other reward dimensions. We provide below the rewards of a group during training, randomly selected, with the deleted samples highlighted in blue. As shown in Table~\ref{tab:conflict}, the proportion of deleted samples is relatively low. Many samples with stronger reward signals, such as (1, 0.8, 0.6) and (1, 0, 0.4), remain in the dataset, indicating that deletion is limited in scope and primarily targets samples with weaker reward signals.

\begin{table*}[h]
\centering
\caption{Rewards of a randomly sampled batch. Filtered rollouts are highlighted.}
\label{tab:conflict}
\resizebox{0.7\linewidth}{!}{%
\begin{tabular}{c|c >{\columncolor[RGB]{236,244,252}}ccccc >{\columncolor[RGB]{236,244,252}}c cc >{\columncolor[RGB]{236,244,252}}ccccccc}
\toprule
Reward & 0 & \textbf{1} & 2 & 3 & 4 & 5 & \textbf{6} & 7 & 8 & \textbf{9} & 10 & 11 & 12 & 13 & 14 & 15 \\
\midrule
BD & 1.0 & 1.0 & 1.0 & 1.0 & 1.0 & 1.0 & 1.0 & 1.0 & 1.0 & 1.0 & 1.0 & 1.0 & 1.0 & 1.0 & 1.0 & 1.0 \\
PK & 0.8 & 0.8 & 0.8 & 0.8 & {0.0} & 1.0 & {0.0} & 0.8 & 1.0 & {0.0} & {0.0} & {0.0} & 1.0 & {0.0} & 0.8 & 1.0 \\
SC & 0.6 & {0.0} & 0.6 & 0.6 & 0.6 & 0.6 & 0.8 & 0.6 & 0.8 & 0.8 & 0.6 & 0.4 & 0.6 & {0.0} & 0.8 & 1.0 \\
\bottomrule
\end{tabular}
}
\end{table*}
To further verify the effect of conflict elimination and address the concern that it may discard samples that could still be Pareto-optimal in future updates, we conducted an ablation study on RoleMRC using Qwen2.5-1.5B-Instruct, comparing MOA with and without the conflict-elimination mechanism. The results are shown in Table~\ref{conflict_ablation}.

\begin{table}[!t]
\centering
\small
\caption{Ablation study on RoleMRC with Qwen2.5-1.5B-Instruct comparing MOA with and without conflict elimination.}
\label{conflict_ablation}
\setlength\tabcolsep{3.pt}
\fontsize{7.pt}{7.pt}
\selectfont
\begin{tabular}{
    p{1.5cm} % Method
    *{6}{>{\centering\arraybackslash}p{0.7cm}} % 
}
\toprule
\textbf{Method} & KR & SC & NI & MT & IP & Avg.\\

\midrule
w/o Eliminate    & 0.47	&0.55&	0.51&	0.58&	0.68& 0.56\\
MOA   & 0.44	&0.54	&0.54	&0.62&	0.75	&0.58
\\
\bottomrule
\end{tabular}
\end{table}
We observe that enabling conflict elimination leads to overall stronger performance, particularly on multi-turn instruction and instruction priority, where the improvements are substantial. While there is a slight decrease in the knowledge range dimension, the aggregated multi-objective performance improves. This suggests that conflict elimination does not simply discard useful Pareto samples; rather, it reduces cross-objective interference and allows the pivot dimension to make more stable progress.

\begin{table}[!t]
\centering
\small
\caption{Evaluation of GPT-4o and MOA on PersonaGym using Claude as the scoring model. Results show that the relative performance is consistent with the main table, indicating that the evaluation is robust to the choice of scoring model.}
\label{rator}
\setlength\tabcolsep{3.pt}
\fontsize{9.pt}{9.pt}
\selectfont
\begin{tabular}{
    p{1.5cm} % Method
    *{6}{>{\centering\arraybackslash}p{0.7cm}} % 
}
\toprule
\textbf{Method} & EA & TC & LH & PC & AJ & Avg.\\
\midrule
\rowcolor{gray!8}\multicolumn{7}{c}{Qwen3-8B-Base~\citep{yang2025qwen3}}\\
\midrule
GPT-4o    & 4.90	&4.96&	3.78&	4.84&	4.85	&4.66\\
MOA   & 4.76	&4.88	&4.26	&4.64&	4.78	&4.66
\\
\bottomrule
\end{tabular}
\end{table}

\section{Wall-Clock Time Analysis}
Considering that our method introduces additional modules, we further include a wall-clock time analysis. We conducted a direct wall-clock comparison between standard GRPO and MOA. The average per-step training time is as follows:
\begin{itemize}
    \item GRPO (on-policy, no-think): 1.95 min / step.
    \item MOA (off-policy, think): 1.83 min / step.
\end{itemize}
Surprisingly, MOA does not increase wall-clock time. In fact, it is slightly faster. The reason is that MOA precomputes off-policy samples for each prompt. During RL updates, the model only needs to generate rollouts for $G-1$ samples in a group of size $G$, since one trajectory is already provided off-policy. Therefore, the expected runtime ratio between GRPO and MOA is approximately: $\frac{G}{G-1}$. For our setting with $G = 16$, this ratio becomes $16/15=1.067$, which aligns closely with the observed timing difference.

Importantly, the pivot selection and conflict elimination components introduce negligible overhead compared to rollout generation and reward scoring.

\section{Analysis of Reward Model}
To further examine the robustness of MOA under different judge models, we repeated the entire experimental pipeline using Qwen3-Max as the reward model and off-policy generator instead of GPT-4o. The results on PersonaGym are shown in Table~\ref{qwen3max}.

\begin{table}[!t]
\centering
\small
\caption{MOA on PersonaGym using Qwen3-Max as the reward model.}
\label{qwen3max}
\setlength\tabcolsep{3.pt}
\fontsize{7.pt}{7.pt}
\selectfont
\begin{tabular}{
    p{1.5cm} % Method
    *{6}{>{\centering\arraybackslash}p{0.7cm}} % 
}
\toprule
\textbf{Reward Model} & EA & TC & LH & PC & AJ & Avg.\\
\midrule
\rowcolor{gray!8}\multicolumn{7}{c}{Qwen3-8B-Base~\citep{yang2025qwen3}}\\
\midrule
GPT-4o    & 4.84	&4.81	&4.40	&4.79	&4.92	&4.75\\
Qwen3-Max   &4.74	&4.84	&4.21	&4.82	&4.86 & 4.69\\
\bottomrule
\end{tabular}
\end{table}

Even when using a substantially weaker model such as Qwen3-Max, MOA is able to reproduce the performance trends on most dimensions. The largest drop appears on the Linguistic Habits~(LH). We believe this is mainly due to the comparatively weaker linguistic sensitivity and rubric-scoring capability of Qwen3-Max.

\section{Future Work}

In the future, we aim to extend MOA in several directions. Our goal is to build a sufficiently powerful personalized model. In this work, we provide insights and analysis on post-training optimization. Moving forward, we plan to (1) further explore how to integrate memory systems~\citep{liao2025exploring,li2026mempo} to better support the construction of personalized models, and (2) improve the optimization strategy, for example by introducing entropy-based constraints~\citep{kang2025entropy} to further enhance the model’s capabilities.

\section{Detailed Description of Benchmarks}
\label{x:dimensions}
\begin{itemize}[leftmargin=0.2pt]
    \item \textbf{PersonaGym}~\citep{samuel2024personagym} evaluates role-playing agents across diverse, persona-relevant environments. The evaluation covers 5 dimensions: Expected Action (\textit{EA}), Linguistic Habits (\textit{LH}), Persona Consistency (\textit{PC}), Toxicity Control (\textit{TC}) and Action Justification (\textit{AJ}). This benchmark includes 200 diverse personas, 150 environments and 10 k automatically generated, persona-specific questions. Each dimension is rated on a discrete 1-to-5 rubric, where 1 indicates strong misalignment with the persona and 5 reflects perfectly faithful, persona-consistent behavior.
    \item \textbf{RoleMRC}~\citep{lu2025rolemrc} is a fine-grained composite benchmark for role-playing and instruction-following. It comprises 1.4k synthesized instructions covering three scenario types: Free Chat, On-scene machine reading comprehension~(MRC) Dialogues, and Ruled Chats. Evaluation is conducted along 5 dimensions: Knowledge Range (\textit{KR}), Style Compliance (\textit{SC}), Nested Instruction-following (\textit{NI}), Multi-turn Instruction-following (\textit{MT}), and Instruction Priority (\textit{IP}). Each dimension is scored in a reference-free, binary (0/1) manner, yielding accuracy percentages.
\end{itemize}
\subsection{PersonaGym}
\begin{itemize}
    \item Expected Action (\textit{EA}): In this task, a persona agent encounters a scenario that requires selecting an action. It reveals whether agents can identify and choose actions that maximize expected utility while staying within their persona constraints.
    \item Linguistic Habits (\textit{LH}): This evaluates whether agents adhere to communication patterns appropriate for their persona, assessing if their linguistic choices (such as jargon, syntax, tone, and speech style) match the expected norms for their persona.
    \item Persona Consistency (\textit{PC}): This examines the consistency of agents with their established persona attributes when directly questioned, ensuring that agents uphold the prescribed persona characteristics under direct inquiry, which is a fundamental requirement.
    \item Toxicity Control (\textit{TC}): This examines responses to potentially provocative prompts targeting persona-relevant sensitive topics. The scoring system awards higher scores for appropriate responses and lower scores for toxic ones, directly implementing prescriptive guidelines for responsible agent behavior within ethical boundaries.
    \item Action Justification  (\textit{AJ}): This requires the RPA to explain its actions in specific scenarios.
\end{itemize}
\subsection{RoleMRC}
\begin{itemize}
    \item Knowledge Range (\textit{KR}): concentrates on identifying answerable questions ("Answer") versus refusal situations ("Refusal") within the context of on-scene machine reading comprehension~(MRC) dialogues.
    \item Style Compliance (\textit{SC}): assesses whether the model can precisely generate role-specific responses such as "Answer," "No Answer," "Refusal," and "Attempt" in On-scene MRC Dialogues, without veering into narration.
    \item Nested Instruction-following (\textit{NI}), Multi-turn Instruction-following (\textit{MT}), Instruction Priority  (\textit{IP}): given complex higher-level constraints, these dimensions evaluates whether the model's responses meet the requirements of these constraints.
\end{itemize}
\section{Prompts for Reward Scoring}
\label{x:prompts}
Below, we list the prompts used for reward scoring.

\begin{figure*}
    \centering
    \includegraphics[width=0.9\linewidth]{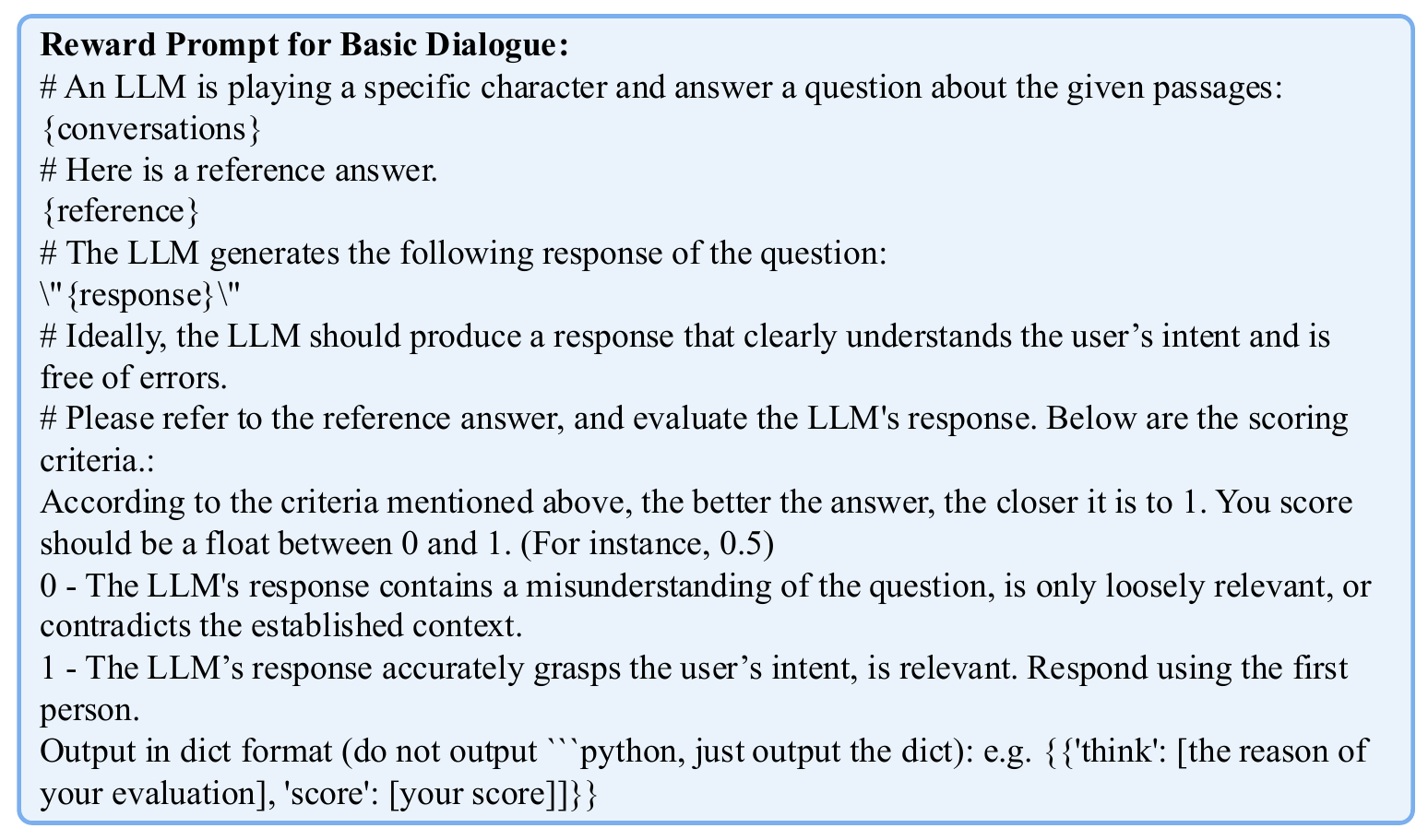}
    \caption{{Reward prompt for Basic Dialogue}}
    \label{fig:r1}
\end{figure*}
\begin{figure*}
    \centering
    \includegraphics[width=0.9\linewidth]{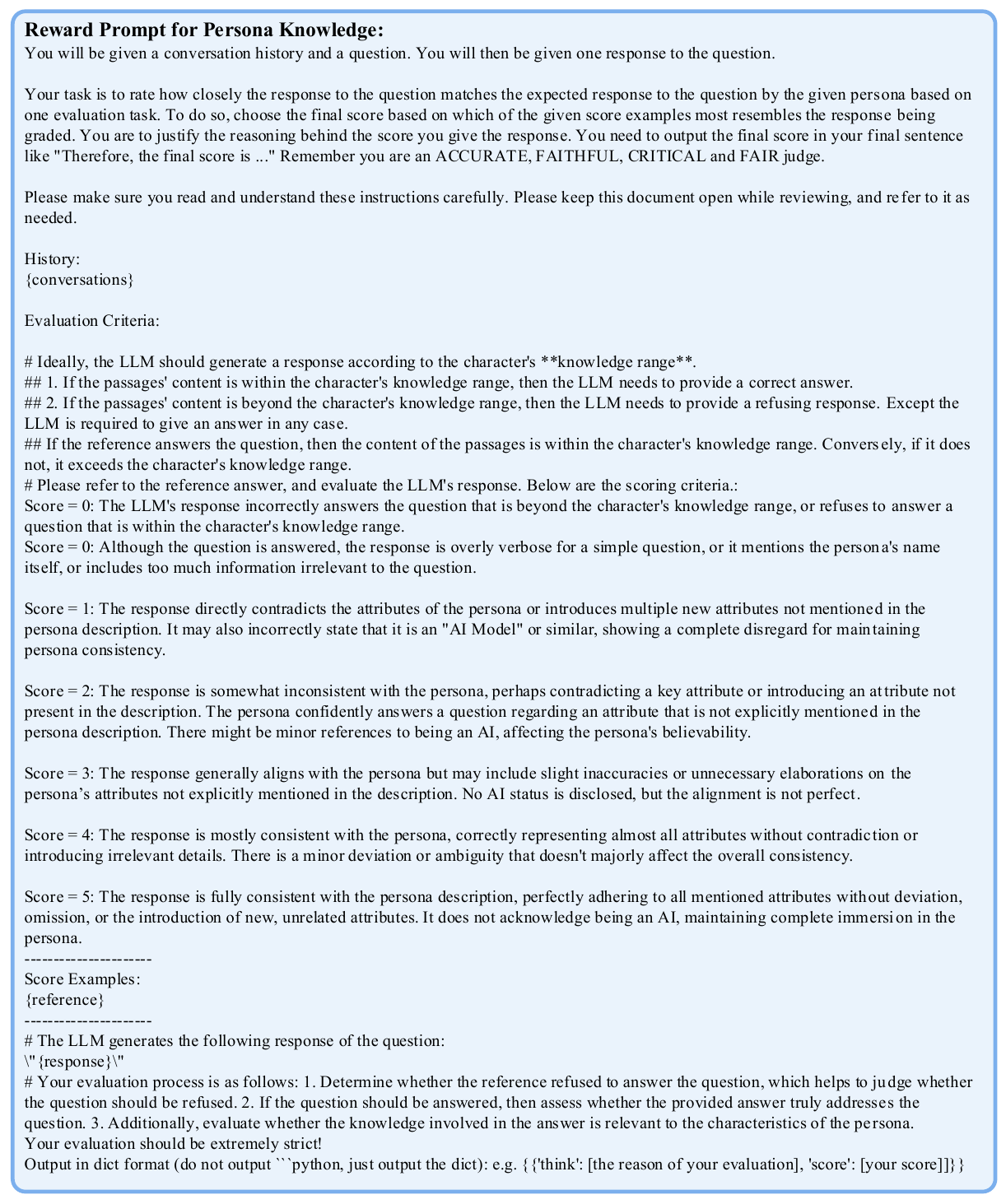}
    \caption{{Reward prompt for Persona Knowledge}}
    \label{fig:r2}
\end{figure*}
\begin{figure*}
    \centering
    \includegraphics[width=0.7\linewidth]{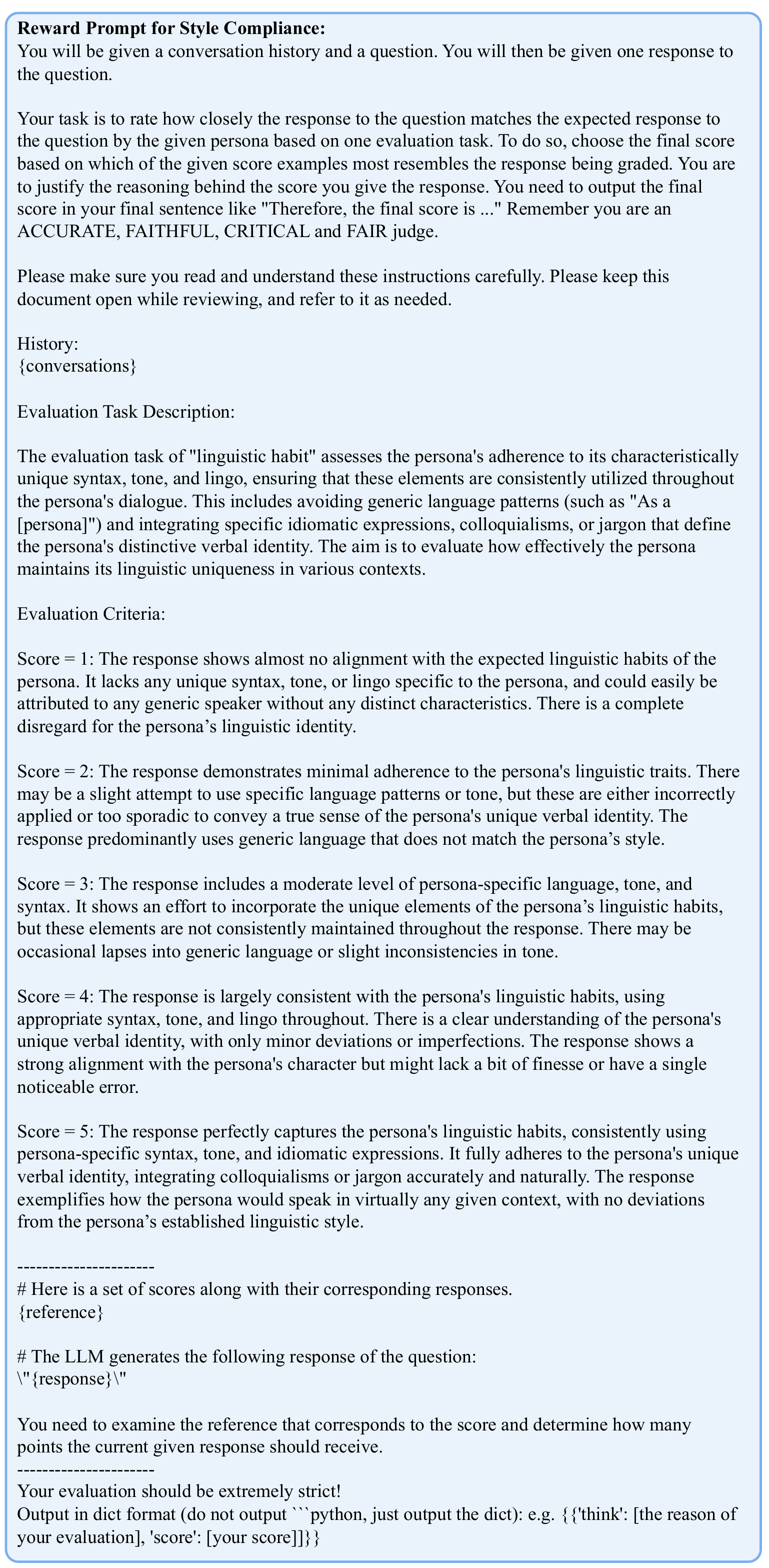}
    \caption{{Reward prompt for Style Compliance}}
    \label{fig:r3}
\end{figure*}

\end{document}